\documentclass[letterpaper]{article} 
\usepackage{aaai2026}  
\usepackage{times}  
\usepackage{helvet}  
\usepackage{courier}  
\usepackage[hyphens]{url}  
\usepackage{graphicx} 
\usepackage{natbib}  
\usepackage{caption} 
\usepackage{algorithm}
\usepackage{algorithmic}
\usepackage{tikz}
\usetikzlibrary{positioning,fit,decorations.pathmorphing}
\usetikzlibrary{arrows,shapes,positioning,shadows,trees,calc}
\usepackage{multirow}
\usepackage{amsfonts}
\usepackage{amsmath}
\usepackage{amsthm}
\usepackage{booktabs} 
\newtheorem{definition}{Definition}
\newtheorem{theorem}{Theorem}
\newtheorem{lemma}{Lemma}

\newtheorem{corollary}{Corollary}
\newtheorem{assumption}{Assumption}

\newtheorem{remark}{Remark}
\newtheorem{claim}{Claim}

\usepackage{newfloat}
\usepackage{listings}
\DeclareCaptionStyle{ruled}{labelfont=normalfont,labelsep=colon,strut=off} 
\floatstyle{ruled}
\newfloat{listing}{tb}{lst}{}
\floatname{listing}{Listing}
\title{Information Fidelity in Tool-Using LLM Agents:\\
A Martingale Analysis of the Model Context Protocol}

\author {
    Flint Xiaofeng Fan\textsuperscript{\rm 1,\rm 3},
    Cheston Tan\textsuperscript{\rm 1},
    Roger Wattenhofer\textsuperscript{\rm 3},
    Yew-Soon Ong\textsuperscript{\rm 1,\rm 2}
}
\affiliations {
    \textsuperscript{\rm 1}CFAR, IHPC, Agency for Science, Technology and Research, Singapore\\
    \textsuperscript{\rm 2}College of Computing and Data Science, Nanyang Technological University, Singapore\\
    \textsuperscript{\rm 3}Department of Information Technology and Electrical Engineering, ETH Zurich, Switzerland \\
    fxf@u.nus.edu, cheston-tan@a-star.edu.sg, wattenhofer@ethz.ch, asysong@ntu.edu.sg
}

\usepackage{bibentry}

\begin{document}

\maketitle

\begin{abstract}
As AI agents powered by large language models (LLMs) increasingly use external tools for high-stakes decisions, a critical reliability question arises: how do errors propagate across sequential tool calls? We introduce the first theoretical framework for analyzing error accumulation in Model Context Protocol (MCP) agents, proving that cumulative distortion exhibits linear growth and high-probability deviations bounded by $O(\sqrt{T})$. This concentration property ensures predictable system behavior and rules out exponential failure modes. We develop a hybrid distortion metric combining discrete fact matching with continuous semantic similarity, then establish martingale concentration bounds on error propagation through sequential tool interactions. Experiments across Qwen2-7B, Llama-3-8B, and Mistral-7B validate our theoretical predictions, showing empirical distortion tracks the linear trend with deviations consistently within $O(\sqrt{T})$ envelopes. Key findings include: semantic weighting reduces distortion by 80\%, and periodic re-grounding approximately every 9 steps suffices for error control. We translate these concentration guarantees into actionable deployment principles for trustworthy agent systems. The codebase is available at \url{https://github.com/flint-xf-fan/MCP}.
\end{abstract}


\section{Introduction}\label{sec:intro}
Large language models (LLMs) have achieved remarkable capabilities across natural language processing tasks, yet they remain fundamentally constrained by the static snapshots of knowledge encoded in their training corpora \cite{villalobos2022will,naveed2023comprehensive,li2024knowledge}. Once deployed, an LLM cannot ingest new facts or correct errors without costly retraining, leading to stale or even blatantly incorrect outputs in rapidly changing domains such as real-time news, financial markets, or clinical guidelines \cite{bommasani2021opportunities, liang2022holistic, fan2025fedrlhf}. 
As Silver and Sutton~\cite{silver2025era} aptly observe, ``we stand on the threshold of a new era in artificial intelligence'' where systems must transcend fixed datasets and learn through dynamic interaction.

To address this fundamental limitation, the Model Context Protocol (MCP) has emerged as one popular tool for connecting LLMs to external tools and data sources \cite{anthropic2024mcp}. 
MCP transforms what would traditionally be an $M\times N$ integration problem (with $M$ different AI applications needing custom connections to $N$ different tools) into a more manageable $M+N$ approach through a unified JSON-RPC framework, enabling seamless composition of models and services, as illustrated in Figure~\ref{fig:mcp_workflow}. 
Major platforms including Anthropic and Microsoft have implemented MCP-compatible interfaces \cite{anthropic2024mcp,microsoft2025mcpazure}, facilitating the development of systems that can retrieve up-to-date information, perform precise calculations, and interact with real-world services. Through MCP, LLMs can continuously integrate fresh external evidence to stay epistemically grounded and operationally up-to-date long after their initial training, instantiating Silver and Sutton's concept of experiential learning~\cite{silver2025era}.

\begin{figure}[t]
  \centering
    \includegraphics[width=0.7\linewidth]{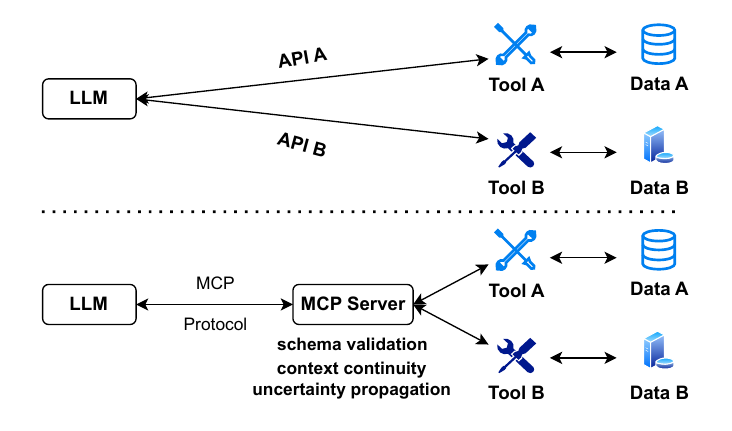}
  \caption{MCP standardizes LLM-tool integration through a unified JSON-RPC interface (bottom), replacing custom per-tool connections (top) with centralized schema validation, context management, and uncertainty propagation.}
  \label{fig:mcp_workflow}
\end{figure}

However, this enhanced agility puts \emph{information fidelity} at risk: each call to an external tool introduces opportunities for factual error, semantic drift, and compounding inaccuracies \cite{qin2025don}. In safety-critical applications, ranging from clinical decision support to financial analysis, even a minor mistake in an early query can cascade into catastrophic downstream consequences due to the sequential nature of decision-making processes~\cite{fan2021fault, fan2025position}. 
For instance, in 2012, Knight Capital lost \$440 million due to a software error in their trading algorithm, highlighting how small mistakes can lead to significant financial consequences \cite{dolfing2019}. Similarly, studies have shown that AI systems can provide inaccurate medical drug dosages, potentially leading to dangerous treatment recommendations \cite{ramasubramanian2024comparative}.

As tool usage becomes increasingly central to deployed AI systems, ensuring information fidelity across sequential interactions emerges as a critical challenge. Despite MCP's widespread adoption in production systems, we lack a formal framework for understanding how errors accumulate across adaptive queries, or what guarantees can be provided regarding the integrity of multi-step reasoning chains. While empirical work has demonstrated that function-calling improves factuality \cite{schick2023toolformer, yao2023react}, rigorous theoretical bounds on error propagation remain elusive.

In this paper, we bridge this gap by providing, to our knowledge, the first work to derive \emph{general high-probability deviation bounds} for cumulative hybrid semantic distortion in MCP-style, tool-augmented LLM pipelines under explicit dependence assumptions.
A martingale represents a mathematical ``fair game'' where future expectations equal the current state---intuitively preventing errors from spiraling uncontrollably across sequential interactions.
By adapting concentration techniques from adaptive data analysis \cite{howard2021selfnormalized, dwork2015preserving} and applying them to sequential tool interactions, we establish formal connections between everyday engineering practice and statistical learning theory. Our analysis models MCP exchanges as a bounded-difference martingale with finite dependency horizons, enabling us to derive high-probability bounds on cumulative semantic distortion. Our main contributions are:

\begin{itemize}
  \item \textbf{Information Fidelity Framework (Section~\ref{sec:framework}).}  We develop a comprehensive theoretical framework for analyzing MCP interactions, including a novel semantic distortion metric that combines weighted discrete fact matching with continuous embedding-based similarity, capturing both strict factual correctness and nuanced semantic drift. 
\item \textbf{Concentration Bounds (Section~\ref{sec:concentration}).} Within our framework, we derive high-probability concentration bounds showing cumulative semantic distortion deviations are bounded by $O(\sqrt{T \ln(1/\eta)})$, with experiments confirming consistent linear growth at constant per-step rates across sequential MCP interactions.
  
  \item \textbf{Empirical Validation (Section~\ref{sec:experiments}).} 
    Through experiments with Qwen2-7B-Instruct, Llama-3-8B-Instruct,
and Mistral-7B-Instruct-v0.3, we demonstrate that empirical distortion exhibits the predicted linear growth with $O(\sqrt{T})$ concentration bounds across varying dependency strengths.

\item \textbf{Practical Insights   (Section~\ref{sec:discussion}).} 
We derive actionable design principles from theoretical results, including optimal re-grounding intervals and distortion monitoring strategies, enabling safe deployment of MCP agents in safety-critical domains where unchecked error accumulation risks catastrophic failures.

\end{itemize}

\section{Preliminaries}
\label{sec:related}

\paragraph{Tool-Augmented Language Models}  
A rich line of work teaches LLMs to invoke external tools, e.g.\ Toolformer~\cite{schick2023toolformer} and ToolLLM~\cite{qin2023toolllm}. ReAct \cite{yao2023react} interleaves reasoning and tool calls. While these methods boost performance, none provide formal guarantees on how errors accumulate across multiple invocations.

\paragraph{Retrieval-Augmented Generation}  
RAG systems \cite{lewis2020retrieval} empower LLMs with external corpus lookups. Recent studies analyzing retrieval errors include RARR \cite{gao2023rarr} and Faithful reasoning \cite{creswell2022faithful}. Despite advances in index refinement \cite{ram2023context} and end-to-end training \cite{borgeaud2022improving,guu2020retrieval}, the theoretical reliability of repeated retrieval--generation loops remains open.

\paragraph{Model Context Protocol}  
General RPC/API standards, such as JSON-RPC \cite{jsonrpc2013} and OpenAPI \cite{openapi2021}, and the Language Server Protocol \cite{microsoft2022lsp} were not designed for LLM workflows. MCP \cite{anthropic2024mcp} fills this gap, prescribing schema validation, context continuity, and uncertainty handling. Its emergence is evidenced by platforms like Anthropic~\cite{anthropic2024mcp} and Microsoft Azure \cite{microsoft2025mcpazure}. Figure~\ref{fig:mcp_workflow} illustrates the typical flow: an LLM issues a JSON-RPC request validated by an MCP proxy, which routes it to appropriate tools and returns structured responses.
A primer on MCP is provided in Appendix~\ref{app:mcp_primer}.

\paragraph{Martingale-Based Analyses}  
Concentration bounds such as the Azuma--Hoeffding \cite{azuma1967,hoeffding1963probability} and Freedman's refinement \cite{freedman1975} underpin online learning and adaptive control \cite{dwork2015preserving}. A martingale is a stochastic process $\{X_t\}_{t=0}^T$ where $\mathbb{E}[X_{t+1} | X_0, X_1, \ldots, X_t] = X_t$, formalizing the concept of a ``fair game.`` Modern self-normalized variants \cite{howard2021selfnormalized} tighten these results, but to our knowledge no work has used them to quantify distortion in LLM--tool pipelines. The Azuma--Hoeffding inequality states that for a martingale with bounded differences $|X_{t+1} - X_t| \leq c_t$:
$\Pr\left( X_T - X_0 \geq \delta \right) \leq \exp\left( -\frac{\delta^2}{2\sum_{t=0}^{T-1} c_t^2} \right)$.

To our knowledge, no prior work unifies these strands into a principled, end-to-end theory of error accumulation in MCP-based LLM systems. 
In the next section, we develop our Information Fidelity Framework, which formalizes MCP interactions as an adaptive stochastic process and establishes the theoretical foundation for analyzing semantic distortion.

\section{Information Fidelity Framework}
\label{sec:framework}

To address the gap in prior work, we propose an Information Fidelity Framework that models MCP interactions as an adaptive stochastic process, quantifying error accumulation through a novel distortion metric and martingale analysis. Our framework formalizes how information propagates and potentially degrades through sequential MCP interactions, providing theoretical foundations for bounding semantic distortion across tool calls. We begin by establishing basic definitions and our semantic distortion metric, followed by key modeling assumptions, and finally derive the properties that enable our concentration results.


\begin{definition}[Information Filtration]
\label{def:filtration}
The sequence of pairs of query--response $\{(Q_t,R_t)\}$ is modeled via the natural filtration $\mathcal{F}_t=\sigma(Q_1,R_1,\dots,Q_t,R_t)$, representing all information available after $t$ interactions.
\end{definition}

\begin{definition}[Influence Function]
\label{def:influence}
The parameter $\beta\in(0,1)$ controls exponential decay of past influences through the influence function $\phi(i,j) = \beta^{j-i}$ for $1 \leq i < j \leq T$.

\end{definition}

\begin{definition}[Ideal Fact Set]
\label{def:ideal_facts}
For each prompt--response pair \((Q_t,R_t)\), let \(\mathcal{I}_t\) be the set of ground-truth facts (e.g.\ triples or attribute--value pairs) that \(R_t\) is expected to include.
\end{definition}

\subsection{Quantifying Semantic Distortion}
\label{subsec:distortion}

Measuring information fidelity requires balancing strict factual correctness against broader semantic similarity. We introduce a hybrid metric that captures both dimensions:
\begin{equation}
\label{eq:d_sem}
  \Delta_t = d_{\mathrm{sem}}\bigl(R_t,\mathcal{I}_t\bigr)
  =
  (1-\lambda)\,d^{w}_{\mathrm{set}}(R_t,\mathcal{I}_t)
  +
  \lambda\,d_{\mathrm{emb}}(R_t,\mathcal{I}_t),
\end{equation}
where:
\begin{align*}
  d^w_{\mathrm{set}}(R_t,\mathcal{I}_t)
  &= 1 
     - \frac{\sum_{f\in\hat{\mathcal{I}}_t\cap\mathcal{I}_t} w(f)}
            {\sum_{f\in\hat{\mathcal{I}}_t\cup\mathcal{I}_t} w(f)}, 
  \quad \hat{\mathcal{I}}_t=\text{extract}(R_t),
  \\
  d_{\mathrm{emb}}(R_t,\mathcal{I}_t)
  &= \tfrac{1 - \cos\bigl(\mathrm{embed}(R_t),\,\mathrm{embed}(\mathcal{I}_t)\bigr)}{2}.
\end{align*}

Here, $\text{extract}(R_t)$ is a function that extracts the set of facts from response $R_t$, $w(f)$ assigns importance weights to individual facts, and $\mathrm{embed}(\cdot)$ maps text to a semantic vector space. We define $\mathrm{embed}(\mathcal{I}_t) = \frac{1}{|\mathcal{I}_t|} \sum_{f \in \mathcal{I}_t} \mathrm{embed}(f)$, or for weighted facts, the weighted average of their embeddings.

The first component, \(d^w_{\mathrm{set}}\), measures factual completeness through weighted fact matching, penalizing both missing and incorrect facts. The second component, \(d_{\mathrm{emb}}\), captures semantic similarity in embedding space, addressing nuanced meaning beyond exact fact matching. The parameter $\lambda \in [0,1]$ allows tuning the trade-off between factual precision and semantic similarity. The choice of $\lambda$ depends on the application; for example, $\lambda = 0.5$ balances factual and semantic priorities equally, while lower values prioritize factual accuracy and higher values emphasize semantic coherence.

To illustrate how this metric operates in practice, consider a simple example:

\begin{itemize}
\item Query: ``What is the capital of France?''
\item Response A: ``Paris is the capital of France.''
\item Response B: ``The capital of France is Paris.''
\end{itemize}
Both responses contain the same fact (Paris is France's capital), so $d^w_{\mathrm{set}} = 0$ for both. However, $d_{\mathrm{emb}}$ may differ slightly due to wording differences. The hybrid metric $d_{\mathrm{sem}}$ captures this nuance while still recognizing factual equivalence. 
This example demonstrates how our metric balances factual correctness with semantic nuance, providing a more comprehensive measure of information fidelity than either component alone.

\begin{definition}[Cumulative Distortion]
\label{def:cumulative_distortion}
Let $T$ be the total number of tool calls in an MCP interaction. The cumulative distortion is defined as:
\begin{align*}
  D(T) \;=\; \sum_{t=1}^T \Delta_t,
\end{align*}
where $\Delta_t$ is the step-wise semantic distortion at step $t$ as defined in Equation~\eqref{eq:d_sem}.
\end{definition}

\subsection{Modeling Assumptions}
\label{subsec:assumptions}

Our Information Fidelity Framework requires standard technical assumptions on normalization and continuity (see Appendix~\ref{app:technical_assumptions}), ensuring our distortion metric is well-defined and bounded in [0,1]. We now state the substantive modeling assumptions.

\begin{definition}[Effective Branching Factor]
\label{def:branching_factor}
The effective branching factor $B \ge 1$ denotes the maximum number of distinct future queries that can be directly influenced by a single response within the interaction graph. Sequential chains satisfy $B=1$, while tree-structured interactions have $B$ equal to the maximum number of children per node.
\end{definition}

\begin{assumption}[Bounded Branching]
\label{assump:bounded_branching}
Let $B$ be the maximum effective branching factor. Then $\beta B < 1$.
\end{assumption}

Here, the branching factor $B$ represents the maximum number of future queries potentially influenced by a single response. This condition is \emph{sufficient} to establish our theoretical guarantees, ensuring influence decays across steps. Notably, this is only a \emph{sufficient} but not \emph{necessary} condition for establishing the concentration bound.

\begin{assumption}[Response Stability]
\label{assump:response_stability}
There exists $\alpha \leq 1$ such that small input perturbations induce only continuous changes in the LLM's response distribution, governed by an influence function $\phi(i,j)$.
\end{assumption}

Modern LLMs exhibit a degree of robustness to minor input variations, and this assumption captures that property. Without response stability, a single character change could dramatically alter all subsequent reasoning, making error bounds impossible.

\begin{assumption}[Temporal Decay Structure]
\label{assump:decay_structure}
For any $t<T$ and any two execution paths that coincide through time $t$ but differ at step $t+1$, there exist constants $\alpha \in [0,1]$, $\beta \in [0,1)$, and branching factor $B \geq 1$ such that distortions at future steps can be coupled to satisfy
\[
|\Delta_{t+k} - \Delta'_{t+k}| \leq \alpha(\beta B)^{k-2}, \quad k \geq 2,
\]
almost surely under appropriate coupling. Additionally, $|\Delta_{t+1} - \Delta'_{t+1}| \leq 1$ by Lemma~\ref{lemma:bounded_distortion}. If periodic re-grounding occurs every $m$ steps, then $\Delta_{t+k} = \Delta'_{t+k}$ for all $k > m$.
\end{assumption}

\begin{table*}[t]
\centering
\begin{tabular}{lcc}
\toprule
\textbf{Response Pair} & \textbf{Fact Match} & \textbf{Embedding Distance} \\
\midrule
``Apple's revenue was \$94.8B`` vs. ``Apple reported \$94.8B in revenue`` & 1.0 & 0.17 \\
``The GDP grew by 2.5\% in Q2`` vs. ``Second quarter GDP expansion was 2.5\%`` & 1.0 & 0.21 \\
``The meeting is on Monday at 3PM`` vs. ``Monday at 3PM is when we'll meet`` & 1.0 & 0.35 \\
\bottomrule
\end{tabular}
\caption{Embedding distances for responses with identical facts but different wording}
\label{tab:embedding_distances}
\end{table*}

\begin{remark}[Modeling framework justification]
\label{rem:framework_justification}
This temporal decay model captures three key properties of LLM-tool systems: (i) exponential decay reflecting finite effective memory~\cite{press2022train}, (ii) bounded propagation preventing error amplification, and (iii) reset through periodic re-grounding—standard practice in production deployments. The framework connects to established coupling techniques from Markov chain mixing~\cite{levin2017markov} and adaptive data analysis~\cite{dwork2015preserving}, providing sufficient conditions for concentration bounds. Section~\ref{sec:experiments} validates the decay rate $\beta$ through autocorrelation diagnostics.
\end{remark}

\begin{figure}[t]
  \centering
  
  \begin{tikzpicture}[node distance=0.95cm,>=stealth]
    \tikzstyle{query}=[circle, draw=black, fill=blue!20, minimum size=0.65cm]
    \tikzstyle{response}=[circle, draw=black, fill=green!20, minimum size=0.65cm]
    \tikzstyle{dep}=[->, thick]
    \node (q1) [query]                {\(Q_1\)};
    \node (r1) [response, right of=q1] {\(R_1\)};
    \node (q2) [query, right of=r1]  {\(Q_2\)};
    \node (r2) [response, right of=q2] {\(R_2\)};
    \node (q3) [query, right of=r2]  {\(Q_3\)};
    \node (r3) [response, right of=q3] {\(R_3\)};
    
    \node (dots) [right of=r3, node distance=1cm] {\(\ldots\)};
    
    \node (qT) [query, right of=dots, node distance=1cm] {\(Q_T\)};
    \node (rT) [response, right of=qT] {\(R_T\)};
   
    \draw[dep] (r1) -- node[below] {\(\phi(1,2)\)} (q2);
    \draw[dep] (r2) -- node[below] {\(\phi(2,3)\)} (q3);
    \draw[dep] (r3) -- (dots);
    \draw[dep] (dots) -- (qT);
    \draw[dep, dashed] (r1) to[bend left=45] node[above,font=\small]{\(\beta^2\)} (q3);
    \draw[dep, dashed] (r2) to[bend left=45] node[above,font=\small]{\(\phi(2,T)\)} (qT);

  \end{tikzpicture}
  \caption{Dependency graph for MCP interactions. Solid arrows indicate direct influence \(\phi(i,i+1)=\beta\), and the dashed arrow shows long‐range decay \(\phi(i,j)=\beta^{j-i}\). This exponential decay structure enables us to prove that distortion deviations remain sublinear, even with adaptive queries.}

  \label{fig:dependency_structure}
\end{figure}
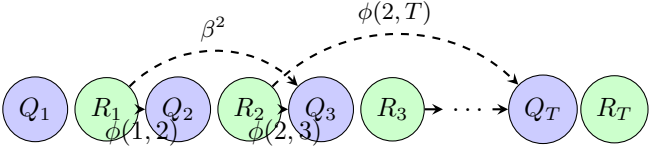

\subsection{Modeling as an Adaptive Stochastic Process}
\label{subsec:stochastic_process}

The inherently sequential nature of tool-augmented reasoning requires careful modeling of dependencies between successive interactions. We formalize this as a sequence of query-response pairs \(\{(Q_t,R_t)\}_{t=1}^T\), where each new query can depend on all previous interactions:
\[
  Q_t \;\in\;\sigma\bigl(Q_1,R_1,\dots,Q_{t-1},R_{t-1}\bigr).
\]
This adaptive formulation is essential for capturing realistic LLM behavior, where each tool call builds on previous responses.

Figure~\ref{fig:dependency_structure} illustrates this dependency structure. A key insight is that influence follows an exponential decay pattern described by Definition \ref{def:influence}. This pattern aligns with how information naturally attenuates in sequential reasoning: recent statements have stronger impact than distant ones, and verification steps effectively reset error chains by grounding the model in authoritative data.

\subsection{Properties of Semantic Distortion}
\label{subsec:distortion_properties}
Our semantic distortion metric exhibits crucial properties that enable concentration analysis. First, it distinguishes lexical variation from semantic equivalence:

\begin{claim}[Semantic Sensitivity]
\label{lemma:semantic_sensitivity}
If two responses contain identical facts but differ lexically, then
\[
  d^w_{\mathrm{set}}(R_t,\mathcal{I}_t)
  = d^w_{\mathrm{set}}(R'_t,\mathcal{I}_t),
  \quad
  d_{\mathrm{emb}}(R_t,\mathcal{I}_t)
  \neq d_{\mathrm{emb}}(R'_t,\mathcal{I}_t),
\]
ensuring \(d_{\mathrm{sem}}\) captures semantic nuance.
\end{claim}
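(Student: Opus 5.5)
The claim has two parts of different character. The equality of the set-based distortions is a structural identity that follows from the definitions. The strict inequality of the embedding distortions cannot hold for every embedding, so it must be proved under an explicit genericity condition on $\mathrm{embed}$.

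\textbf{Set component.} I would read ``contain identical facts'' as $\mathrm{extract}(R_t)=\mathrm{extract}(R'_t)$, i.e.\ $\hat{\mathcal{I}}_t=\hat{\mathcal{I}}'_t$. Then $d^w_{\mathrm{set}}(R_t,\mathcal{I}_t)$ depends on $R_t$ only through $\hat{\mathcal{I}}_t$. Both the numerator $\sum_{f\in\hat{\mathcal{I}}_t\cap\mathcal{I}_t}w(f)$ and the denominator $\sum_{f\in\hat{\mathcal{I}}_t\cup\mathcal{I}_t}w(f)$ are functions of the pair $(\hat{\mathcal{I}}_t,\mathcal{I}_t)$ alone. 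The two weighted Jaccard distances are therefore the same number. The denominator is positive by the normalization assumptions of Appendix~\ref{app:technical_assumptions}, since weights are positive and $\mathcal{I}_t\neq\emptyset$.

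\textbf{Embedding component.} Here the goal is $\cos(\mathrm{embed}(R_t),v)\neq\cos(\mathrm{embed}(R'_t),v)$, where $v=\mathrm{embed}(\mathcal{I}_t)$. As literally stated this is false in general. If $\mathrm{embed}$ maps $R_t$ and $R'_t$ to the same direction, or to two distinct vectors with equal angle to $v$, then equality holds. So I would present this half as holding under an explicit genericity condition. Equivalently, the claim says $d_{\mathrm{emb}}$ is \emph{able} to distinguish such pairs, whereas $d^w_{\mathrm{set}}$ never does. The steps are as follows:
\begin{enumerate}
\item Assume the embedding is lexically sensitive, so that lexically distinct strings give non-parallel vectors $u\neq u'$ after normalization.
\item Note that the set $\{x:\cos(x,v)=\cos(u,v)\}$ is a cone of codimension one. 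Under any continuous perturbation or generic choice of embedding, $u'$ lies on it with probability zero.
\item Conclude the strict inequality generically.
\item Exhibit Table~\ref{tab:embedding_distances} as concrete witnesses. There the fact match is $1.0$ while embedding distances are nonzero, namely $0.17$, $0.21$ and $0.35$. Comparing each pair against the shared $\mathcal{I}_t$ gives unequal values of $d_{\mathrm{emb}}$.
\end{enumerate}

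\textbf{Combining.} Since $\lambda>0$, the hybrid $d_{\mathrm{sem}}$ inherits the difference: $d_{\mathrm{sem}}(R_t)-d_{\mathrm{sem}}(R'_t)=\lambda\,(d_{\mathrm{emb}}(R_t)-d_{\mathrm{emb}}(R'_t))\neq 0$. This is the sense in which $d_{\mathrm{sem}}$ ``captures semantic nuance.''

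\textbf{Main obstacle.} The main obstacle is the strict inequality. It is not a theorem about arbitrary embeddings, so the work lies in stating a clean genericity hypothesis. I would use the measure-zero level-set argument of step 2 and flag that, without such a hypothesis, the claim is only an existence statement supported by the examples in Table~\ref{tab:embedding_distances}.
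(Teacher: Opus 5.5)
Your treatment of $d^w_{\mathrm{set}}$ matches the paper's argument exactly: identical extraction makes the weighted Jaccard expression literally the same. So does your final passage to $d_{\mathrm{sem}}$ via $\lambda>0$. On $d_{\mathrm{emb}}$ you depart from the paper, and you are right to. The paper's proof simply asserts that $\mathrm{embed}(R_t)\neq\mathrm{embed}(R'_t)$ ``causes'' $d_{\mathrm{emb}}$ to differ and cites Table~\ref{tab:embedding_distances}. That is precisely the non sequitur you flag, so the literal claim is not established there, and your conditional version is the defensible one.

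You can make your genericity step exact instead of probabilistic. Under Assumption~\ref{assump:emb_norm}, set $u=\mathrm{embed}(R_t)$, $u'=\mathrm{embed}(R'_t)$ and $v=\mathrm{embed}(\mathcal{I}_t)\neq 0$. Then
\[
d_{\mathrm{emb}}(R_t,\mathcal{I}_t)-d_{\mathrm{emb}}(R'_t,\mathcal{I}_t)=\frac{\langle u'-u,\,v\rangle}{2\|v\|},
\]
so the strict inequality holds if and only if $\langle u-u',v\rangle\neq 0$. That is the natural hypothesis to state. Non-parallelism of $u,u'$ is neither necessary nor sufficient for it. Your phrase ``any continuous perturbation'' also needs tightening. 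Rotating $u'$ about the axis of $v$ is a continuous perturbation that never leaves your level set. What you need is a perturbation whose law is absolutely continuous on the sphere.

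The one step that actually fails is step~4, which repeats the paper's error. Table~\ref{tab:embedding_distances} reports the distance between the two responses, so it certifies only $u\neq u'$. By the identity above, this is fully compatible with $d_{\mathrm{emb}}(R_t,\mathcal{I}_t)=d_{\mathrm{emb}}(R'_t,\mathcal{I}_t)$. The table therefore does not witness even the existence version of the claim, contrary to what you say in your closing remark. You would need each response's distance to $\mathcal{I}_t$ itself. For a purely formal witness, take $v$ a positive multiple of $u$, which gives $0$ versus $\tfrac{1}{2}\bigl(1-\langle u,u'\rangle\bigr)>0$.
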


\begin{proof}
If $R_t$ and $R'_t$ have identical fact coverage, then 
\[
   \text{extract}(R_t) = \text{extract}(R'_t) = \hat{\mathcal{I}}_t.
\]
Therefore, $d_{set}^w(R_t, \mathcal{I}_t) = d_{set}^w(R'_t, \mathcal{I}_t)$.

However, different wording produces different embeddings: $\text{embed}(R_t) \neq \text{embed}(R'_t)$, causing $d_{emb}(R_t, \mathcal{I}_t) \neq d_{emb}(R'_t, \mathcal{I}_t)$. For $\lambda > 0$, this ensures $d_{sem}(R_t, \mathcal{I}_t) \neq d_{sem}(R'_t, \mathcal{I}_t)$. Table~\ref{tab:embedding_distances} shows empirical evidence: embedding distances vary 0.15--0.35 in cosine space even when factual content is identical.
\end{proof}

Second, the metric varies continuously with response perturbations, ensuring small edits induce proportionally small distortion changes (formalized in Lemma~\ref{lemma:continuity}, Appendix~\ref{app:proof_continuity}).




\subsection{Martingale Construction for Concentration}
\label{subsec:martingale}

To bound the cumulative distortion \(D(T)=\sum_{t=1}^T \Delta_t\), we construct a martingale sequence whose increments we can control with classical concentration inequalities.

\begin{definition}[Distortion Martingale]\label{def:distortion_martingale}
Define
\[
  Z_t \;=\;\mathbb{E}\bigl[D(T)\,\bigm|\,\mathcal{F}_t\bigr],
  \quad t=0,1,\dots,T,
\]
where \(Z_0=\mathbb{E}[D(T)]\) and \(Z_T=D(T)\).
\end{definition}

This forms a martingale because each $Z_t$ uses all available information up to time $t$ to predict the final distortion $D(T)$. The martingale property ensures that future distortions are conditionally unbiased relative to our current estimate---mathematically, $\mathbb{E}[Z_{t+1}|\mathcal{F}_t] = Z_t$. This property is crucial as it allows us to apply concentration inequalities to bound how far the actual distortion can deviate from its expectation.

Intuitively, $Z_t$ represents our best prediction of the final cumulative distortion given all information available after $t$ interactions; hence observing the next response cannot systematically bias that prediction. As $t$ increases, $Z_t$ incorporates more information and converges to the actual distortion $D(T)$ when all interactions are observed.

In the next section, we show that the increments \(Z_{t+1}-Z_t\) satisfy a bounded-difference property, which we combine with Azuma's inequality to derive high-probability concentration bounds on how much the actual distortion $D(T)$ can
deviate from its linearly growing expected value, ensuring predictable system behavior despite adaptive querying.

\section{Concentration for Adaptive Querying}
\label{sec:concentration}

Having established our information fidelity framework, we now address the central theoretical question: how does distortion accumulate across sequential tool calls? While Definition~\ref{def:cumulative_distortion} specifies that $D(T) = \sum_{t=1}^T \Delta_t$, each $\Delta_t$ can depend on all previous interactions through the adaptive process formalized in Section~\ref{subsec:stochastic_process}. This adaptive querying could, in principle, allow errors to compound arbitrarily. Our main result shows otherwise: under the temporal decay structure of Assumption~\ref{assump:decay_structure}, cumulative distortion remains tightly concentrated around its expectation (which grows at most linearly, with empirically constant per-step rates), with deviations bounded by $O(\sqrt{T})$ despite full adaptivity.

Our main insight is that despite adaptive querying, where each new prompt can depend arbitrarily on all previous interactions, the cumulative distortion $D(T)=\sum_{t=1}^{T}\Delta_t$ remains tightly concentrated around its expectation. This concentration emerges from the exponential decay of influence across steps, creating a form of ``effective independence`` that enables powerful probabilistic guarantees.

\subsection{The Bounded-Difference Property}

The following lemma establishes how much a single new response can affect our expectation of the total distortion under the temporal decay structure. The constant $C^* = \frac{\alpha}{1-\beta B}$ quantifies the maximum possible cumulative influence that a single response can have on all future queries, with $\alpha$ capturing response stability (Assumption~\ref{assump:response_stability}) and $B$ accounting for the branching factor (Assumption~\ref{assump:bounded_branching}). This uniform bound allows us to derive clean concentration results regardless of a query's position in the sequence.

\begin{lemma}[Bounded Doob increments]
\label{lemma:bounded_differences}
Let $Z_t = \mathbb{E}[D(T) \mid \mathcal{F}_t]$ be the distortion Doob martingale (Definition~\ref{def:distortion_martingale}). Under Assumptions~\ref{assump:bounded_branching} and~\ref{assump:decay_structure}, we have, almost surely,
\[
|Z_{t+1} - Z_t| \leq c_\star \quad \text{for all } t < T,
\]
with
\[
c_\star = 1 + C^* = 1 + \frac{\alpha}{1-\beta B}
\quad \text{when } \beta B < 1.
\]
If periodic re-grounding enforces a finite horizon $m$, then 
\[
c_\star = 1 + \alpha \cdot \frac{1 - (\beta B)^m}{1 - \beta B}
\]
\end{lemma}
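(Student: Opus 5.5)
The natural route is the standard coupling argument for Doob martingales. Fix $t<T$ and write the increment as
\[
Z_{t+1}-Z_t = \mathbb{E}[D(T)\mid\mathcal{F}_{t+1}] - \mathbb{E}\bigl[\,\mathbb{E}[D(T)\mid\mathcal{F}_{t+1}]\bigm|\mathcal{F}_t\bigr].
\]
Given $\mathcal{F}_t$, let $(Q_{t+1},R_{t+1})$ be the realized step and $(Q'_{t+1},R'_{t+1})$ an independent copy drawn from the conditional law given $\mathcal{F}_t$. Then $Z_t$ is the average over the primed copy of $\mathbb{E}[D(T)\mid\mathcal{F}_t,(Q'_{t+1},R'_{t+1})]$. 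It therefore suffices to show that for any two continuations agreeing through time $t$ and differing at step $t+1$,
\[
\bigl|\mathbb{E}[D(T)\mid\mathcal{F}_t,(Q_{t+1},R_{t+1})]-\mathbb{E}[D(T)\mid\mathcal{F}_t,(Q'_{t+1},R'_{t+1})]\bigr|\le c_\star .
\]
Indeed, $|Z_{t+1}-Z_t|$ is bounded by the supremum of this difference, since an average of quantities each within $c_\star$ of a fixed value is itself within $c_\star$ of it.

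To bound this difference, split $D(T)=\sum_{s\le t}\Delta_s+\Delta_{t+1}+\sum_{k\ge2}\Delta_{t+k}$. The first sum is $\mathcal{F}_t$-measurable, so it cancels. The step-$(t+1)$ term contributes at most $1$ by Lemma~\ref{lemma:bounded_distortion}, since $\Delta\in[0,1]$. For the future terms, invoke the coupling of Assumption~\ref{assump:decay_structure}. Build a joint law of the two future paths whose marginals are the two conditional laws and under which $|\Delta_{t+k}-\Delta'_{t+k}|\le\alpha(\beta B)^{k-2}$ almost surely. Because the conditional expectations depend only on the marginals, the difference of expectations equals the expectation of $\sum_{k\ge2}(\Delta_{t+k}-\Delta'_{t+k})$ under the coupling. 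Its absolute value is therefore at most
\[
\sum_{k=2}^{T-t}\alpha(\beta B)^{k-2}\le \alpha\sum_{j\ge0}(\beta B)^j=\frac{\alpha}{1-\beta B}=C^*,
\]
where the series converges by Assumption~\ref{assump:bounded_branching}. This gives $c_\star=1+C^*$ uniformly in $t$. With re-grounding every $m$ steps, the terms with $k>m$ vanish exactly. The sum then truncates to at most $m$ geometric terms, giving $\alpha\,(1-(\beta B)^m)/(1-\beta B)$.

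The main obstacle is the coupling step. Assumption~\ref{assump:decay_structure} is phrased pathwise, for "two execution paths that coincide through time $t$," and we must turn it into a statement about conditional expectations taken under different conditional measures. I would handle this by reading the assumption as asserting the existence of a coupling of the two conditional future laws, which is the Markov-chain-mixing interpretation cited in Remark~\ref{rem:framework_justification}. A secondary point is to check measurability, so that the supremum over $(Q_{t+1},R_{t+1})$ and $(Q'_{t+1},R'_{t+1})$ gives an almost-sure bound. A cleaner alternative to the supremum-of-differences argument is to note directly that $Z_{t+1}-Z_t$ is the average over $R'$ of the pairwise differences. Finally, the range of the geometric index should be matched to the exact form of the finite-horizon formula; the paper's expression includes $m$ terms, which the truncation bound above gives.
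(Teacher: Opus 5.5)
Your proposal is correct and follows the same coupling argument as the paper. The step-$(t+1)$ term is bounded by $1$ via Lemma~\ref{lemma:bounded_distortion}, and the future terms are controlled by the coupling of Assumption~\ref{assump:decay_structure}, summed as a geometric series to $C^*$ and truncated under re-grounding; your ghost-sample formulation, which requires the coupling to preserve the two conditional marginals, just makes explicit the step the paper writes as $\bigl|\mathbb{E}[\Delta_j\mid\mathcal{F}_{t+1}]-\mathbb{E}[\Delta_j\mid\mathcal{F}_t]\bigr|\le\mathbb{E}[|\Delta_j-\Delta'_j|\mid\mathcal{F}_t]$.
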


\begin{proof}[Proof Sketch]
The martingale increment decomposes as $Z_{t+1} - Z_t = (\Delta_{t+1} - \mathbb{E}[\Delta_{t+1}|\mathcal{F}_t]) + \sum_{j=t+2}^T(\mathbb{E}[\Delta_j|\mathcal{F}_{t+1}] - \mathbb{E}[\Delta_j|\mathcal{F}_t])$. The first term is bounded by 1 via Lemma~\ref{lemma:bounded_distortion}. For the tail sum, the pathwise coupling from Assumption~\ref{assump:decay_structure} ensures $|\Delta_j - \Delta'_j| \leq \alpha(\beta B)^{j-t-2}$ almost surely for coupled executions, implying each conditional expectation difference is bounded by the same factor. Summing the geometric series $\sum_{j=t+2}^T \alpha(\beta B)^{j-t-2} \leq \alpha/(1-\beta B) = C^*$ completes the bound. Full details in Appendix~\ref{app:proof_bounded_differences}.
\end{proof}

This bounded-difference property is the linchpin enabling concentration analysis: it ensures that even though each new response can influence all future queries through the adaptive chain, that influence diminishes rapidly enough to maintain exponential concentration around the expected cumulative distortion. 

\subsection{High-Probability Concentration Results}

\begin{theorem}[High-probability distortion bound]
\label{thm:mcp_concentration}
Let $C^* = \frac{\alpha}{1-\beta B}$ and $\gamma^* = 2C^* + (C^*)^2$. For any $\eta\in(0,1)$,
\[
  \Pr\!\left[D(T)-\mathbb{E}D(T)\;\ge\;
  \sqrt{\,2\,T(1+\gamma^*)\,\ln\tfrac1\eta\,}\right]
  \;\le\;\eta.
\]
\end{theorem}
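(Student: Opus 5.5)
The natural route is Azuma--Hoeffding applied to the Doob martingale $Z_t=\mathbb{E}[D(T)\mid\mathcal{F}_t]$ of Definition~\ref{def:distortion_martingale}. Since $Z_0=\mathbb{E}D(T)$ and $Z_T=D(T)$, the event in the theorem is exactly $\{Z_T-Z_0\ge\delta\}$ with $\delta=\sqrt{2T(1+\gamma^*)\ln(1/\eta)}$. Lemma~\ref{lemma:bounded_differences} gives $|Z_{t+1}-Z_t|\le c_\star=1+C^*$ almost surely for every $t<T$. Azuma--Hoeffding as stated in the Preliminaries then yields
\[
\Pr\bigl[D(T)-\mathbb{E}D(T)\ge\delta\bigr]\le\exp\!\left(-\frac{\delta^2}{2\sum_{t=0}^{T-1}c_\star^2}\right)=\exp\!\left(-\frac{\delta^2}{2T(1+C^*)^2}\right).
\]

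The remaining step is algebraic. We have $(1+C^*)^2=1+2C^*+(C^*)^2=1+\gamma^*$, which is precisely why $\gamma^*$ is defined as it is. Substituting $\delta^2=2T(1+\gamma^*)\ln(1/\eta)$ makes the exponent equal to $-\ln(1/\eta)$, so the bound is exactly $\eta$. I would also note the finite-horizon variant. If re-grounding every $m$ steps is in force, the lemma's smaller $c_\star=1+\alpha\frac{1-(\beta B)^m}{1-\beta B}\le 1+C^*$ can be used instead. The stated bound still holds because it is monotone in $c_\star$.

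The main obstacle is verifying the hypotheses of Azuma--Hoeffding rather than the inequality itself. Three points need checking.
\begin{itemize}
\item $Z_t$ must be a genuine martingale with respect to $\mathcal{F}_t$. This follows from the tower property, given that $D(T)\in[0,T]$ is integrable; boundedness comes from Lemma~\ref{lemma:bounded_distortion}, each $\Delta_t\in[0,1]$.
\item $Z_T=D(T)$ requires $D(T)$ to be $\mathcal{F}_T$-measurable. This holds because each $\Delta_t=d_{\mathrm{sem}}(R_t,\mathcal{I}_t)$ is a function of $(Q_t,R_t)$. Here I am treating $\mathcal{I}_t$ as determined by $Q_t$, or as included in the filtration.
\item The increment bound must hold almost surely and uniformly in $t$. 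This is delivered by Lemma~\ref{lemma:bounded_differences} under Assumptions~\ref{assump:bounded_branching} and~\ref{assump:decay_structure}, which I take as given.
\end{itemize}
Azuma--Hoeffding needs only bounded increments, so once these three points hold the adaptivity of the queries creates no further difficulty. A symmetric lower-tail bound follows by applying the same argument to $-Z_t$.
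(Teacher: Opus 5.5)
Your proposal is correct and matches the paper's proof: Azuma--Hoeffding on the Doob martingale $Z_t$, using the uniform increment bound $c_\star = 1+C^*$ from Lemma~\ref{lemma:bounded_differences}, then the identity $(1+C^*)^2 = 1+\gamma^*$ and the choice $\delta=\sqrt{2T(1+\gamma^*)\ln(1/\eta)}$. Your extra checks are sound details the paper leaves implicit: integrability, $\mathcal{F}_T$-measurability of $D(T)$ (with the caveat about $\mathcal{I}_t$), monotonicity in $c_\star$ for the finite-horizon case, and the lower tail.
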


\begin{proof}[Proof Sketch]
By Lemma~\ref{lemma:bounded_differences}, the distortion martingale has \\ bounded increments $|Z_{t+1} - Z_t| \leq c_\star = 1 + C^*$. Summing $c_\star^2 = (1 + C^*)^2 = 1 + 2C^* + (C^*)^2$ over $T$ steps yields $T(1 + \gamma^*)$. Azuma's inequality then gives the stated high-probability bound. The complete proof appears in Appendix~\ref{app:proof_mcp_concentration}.
\end{proof}

The correction factor $\gamma^*$ quantifies how dependencies in our adaptive process inflate the variance term compared to the independent case. With $C^* = \frac{\alpha}{1-\beta B}$ representing the maximum cumulative influence of any response on all future queries, $\gamma^* = 2C^* + (C^*)^2$ captures both the linear interaction between bounded differences and the quadratic effect of compounded influences. Together, they show that even with adaptive queries, cumulative distortion grows at most linearly in expectation (bounded by T since $\Delta_t \in [0,1]$), with empirically observed constant per-step rates and sublinearly bounded deviations.
This concentration property rules out both 
exponential error blowup and chaotic variance growth.
The practical implications of this result are captured in the following corollaries:

\begin{corollary}[Sub-linear deviation]
\label{cor:sublinear}
With probability at least $1-\eta$,
\(
  D(T)=\mathbb{E}D(T)+O\!\bigl(\sqrt{T\ln(1/\eta)}\bigr).
\)
\end{corollary}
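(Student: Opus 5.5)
This is a direct rewriting of Theorem~\ref{thm:mcp_concentration}, so I will invert its tail bound and then absorb the constants into the $O(\cdot)$ notation.

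\textbf{Step 1 (invert the tail bound).} Fix $\eta\in(0,1)$ and set
\[
\epsilon_T(\eta)=\sqrt{2T(1+\gamma^*)\ln(1/\eta)}.
\]
Theorem~\ref{thm:mcp_concentration} says the event $\{D(T)-\mathbb{E}D(T)\ge \epsilon_T(\eta)\}$ has probability at most $\eta$. On the complementary event, which has probability at least $1-\eta$, we get $D(T)\le \mathbb{E}D(T)+\epsilon_T(\eta)$.

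\textbf{Step 2 (lower tail, if needed).} The corollary is naturally read as two-sided. Lemma~\ref{lemma:bounded_differences} gives $|Z_{t+1}-Z_t|\le c_\star$, so Azuma's inequality applies equally to the martingale $-Z_t$. This yields the same bound for $\mathbb{E}D(T)-D(T)$. A union bound, with $\eta$ replaced by $\eta/2$, then gives
\[
|D(T)-\mathbb{E}D(T)|\le \sqrt{2T(1+\gamma^*)\ln(2/\eta)}
\]
with probability at least $1-\eta$. Since $\ln(2/\eta)\le 2\ln(1/\eta)$ for $\eta\le 1/2$, this is still $O(\sqrt{T\ln(1/\eta)})$. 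For $\eta>1/2$ I would either restrict to $\eta\le 1/2$ or absorb the additive $\ln 2$ into the constant. If only the one-sided statement is intended, Step 1 already suffices.

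\textbf{Step 3 (the constants are independent of $T$ and $\eta$).} The implicit constant is $\sqrt{2(1+\gamma^*)}$, where $\gamma^*=2C^*+(C^*)^2$ and $C^*=\alpha/(1-\beta B)$. This is finite by Assumption~\ref{assump:bounded_branching} ($\beta B<1$) together with $\alpha\le 1$, and it does not depend on $T$ or $\eta$. The same holds, with a smaller constant, for the re-grounding version of $c_\star$.

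\textbf{Main obstacle.} The step needing the most care is Step 3. Everything rests on the uniform, $T$-independent increment bound from Lemma~\ref{lemma:bounded_differences}; if $\beta B$ were allowed to approach $1$ as $T$ grows, the $O(\cdot)$ constant would blow up. So I would state explicitly that $\alpha$, $\beta$ and $B$ are fixed model parameters. The remaining steps are routine inversion of an exponential tail bound.
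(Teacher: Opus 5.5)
Your proposal is correct, and Step 1 is exactly the paper's proof: substitute $\delta=\sqrt{2T(1+\gamma^*)\ln(1/\eta)}$ into Theorem~\ref{thm:mcp_concentration}, then absorb the factor $\sqrt{2(1+\gamma^*)}$, which depends on neither $T$ nor $\eta$, into the $O(\cdot)$. Your Step 2 is a small strengthening the paper does not make, since its proof only controls the upper tail even though it writes an equality; the Azuma-on-$-Z_t$ plus union-bound argument is sound for $\eta\le 1/2$, and restricting to that range is the right option, because absorbing $\ln 2$ into the constant fails as $\eta\to 1$, where $\ln(1/\eta)\to 0$.
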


This result directly addresses fears about chaotic error compounding in LLM reasoning chains. Rather than exhibiting exponential divergence, cumulative distortion remains tightly concentrated around a predictable linear trajectory. The $O(\sqrt{T})$ deviation bound guarantees that uncertainty grows sublinearly: deviations from expected distortion scale as $\sqrt{T}$, not $T$. For $T=10$ steps, the high-probability deviation is bounded by $\sqrt{10} \approx 3.2$ times the single-step deviation constant, ensuring predictable system behavior.

\begin{corollary}[Effective information horizon]
\label{cor:info_horizon}
The query-level influence function $\phi(i,j) = \beta^{j-i}$ (Definition~\ref{def:influence}) 
decays exponentially, with influence dropping below threshold $\varepsilon$ after 
$\mathcal{H}_\varepsilon = \lceil\log(\varepsilon)/\log(\beta)\rceil$ steps. 
For $\beta$ close to $1$, this simplifies to approximately $\lceil -\ln(\varepsilon)/(1-\beta)\rceil$. 
The \emph{cumulative} propagation of a single error across all future queries is bounded by $C^* = \alpha/(1-\beta B)$, which quantifies the maximum total influence (amplitude). Together, $\mathcal{H}$ (range) and $C^*$ (magnitude) determine how far and how strongly errors can propagate.
\end{corollary}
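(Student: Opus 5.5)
The statement to prove is Corollary~\ref{cor:info_horizon}, which packages three facts.

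\emph{(i) Horizon.} The horizon claim is an inversion of $\phi(i,j)=\beta^{j-i}<\varepsilon$. I take logarithms. Since $\beta\in(0,1)$ gives $\log\beta<0$, dividing flips the inequality, so $j-i>\log(\varepsilon)/\log(\beta)$. The smallest integer lag with $\beta^{k}\le\varepsilon$ is therefore $k=\lceil\log(\varepsilon)/\log(\beta)\rceil=\mathcal{H}_\varepsilon$. The ratio does not depend on the base of the logarithm, so any base may be used. I will note that $\varepsilon\in(0,1)$ makes this a nonnegative integer, and that monotonicity of $k\mapsto\beta^k$ makes the influence stay below $\varepsilon$ for every larger lag.

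\emph{(ii) Near-one approximation.} For $\beta$ close to $1$, write $\beta=1-\delta$. Then $\ln\beta=-\delta-\delta^2/2-\cdots=-(1-\beta)\bigl(1+O(1-\beta)\bigr)$, so
\[
\mathcal{H}_\varepsilon=\Bigl\lceil \tfrac{-\ln\varepsilon}{(1-\beta)(1+O(1-\beta))}\Bigr\rceil\approx\Bigl\lceil \tfrac{-\ln\varepsilon}{1-\beta}\Bigr\rceil .
\]
For rigor I would also record the elementary bound $-\ln\beta\ge 1-\beta$. It gives $\mathcal{H}_\varepsilon\le\lceil -\ln(\varepsilon)/(1-\beta)\rceil$, so the simplified expression is a conservative upper bound, not merely an approximation.

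\emph{(iii) Amplitude.} This is the geometric-sum argument already in the proof of Lemma~\ref{lemma:bounded_differences}. Fix a perturbation at step $t+1$ and couple two executions as in Assumption~\ref{assump:decay_structure}. The total downstream effect on $D(T)$, excluding the immediate step, is
\[
\sum_{k\ge2}|\Delta_{t+k}-\Delta'_{t+k}|\le\sum_{k\ge2}\alpha(\beta B)^{k-2}=\frac{\alpha}{1-\beta B}=C^*.
\]
The series converges by Assumption~\ref{assump:bounded_branching}. This bound holds pathwise, uniformly in $t$ and $T$, so it is indeed a maximum total influence. With re-grounding the sum truncates at $m$ terms and is even smaller.

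The only delicate point is interpretive rather than technical. $\mathcal{H}_\varepsilon$ is defined through the query-level $\phi$, which uses $\beta$ alone, while $C^*$ uses $\beta B$ together with the prefactor $\alpha$. I will state explicitly that in a sequential chain ($B=1$) the coupled distortion gap at lag $k$ is $\alpha\beta^{k-2}$, so its range is $\mathcal{H}_\varepsilon$ up to an additive constant of $2+\log(\alpha)/\log(\beta)$. That reconciles the ``range'' and ``magnitude'' readings. For $B>1$ the same inversion applied with base $\beta B$ gives the analogous horizon.
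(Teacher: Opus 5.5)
Your argument is correct, and for the horizon itself it matches the paper's proof exactly. The appendix proof takes logarithms of $\beta^h<\varepsilon$ and flips the inequality because $\log\beta<0$. It then only instantiates $\beta=0.7$, $\varepsilon=0.05$ to get $\lceil 8.33\rceil=9$.

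That is the whole proof given there. The near-one simplification is asserted without justification. The $C^*$ amplitude claim is left implicit in the geometric sum from the proof of Lemma~\ref{lemma:bounded_differences}.

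Your proposal fills in both:
\begin{itemize}
\item The expansion $-\ln\beta=(1-\beta)(1+O(1-\beta))$ justifies the approximation in the relative sense. It is not additively exact: as $\beta\to1$ the unrounded quantities differ by about $\tfrac12\ln(1/\varepsilon)$.
\item The inequality $-\ln\beta\ge 1-\beta$ turns the approximation into a rigorous conservative bound $\mathcal{H}_\varepsilon\le\lceil-\ln(\varepsilon)/(1-\beta)\rceil$. This is the direction a re-grounding schedule actually needs.
\item Your pathwise sum over $k\ge2$ is the same computation as in the lemma, and you correctly note it is uniform in $t$ and $T$.
\end{itemize}

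Your switch to $\beta^k\le\varepsilon$ also makes the ceiling exact. The paper's strict inequality $h>\log\varepsilon/\log\beta$ really gives $\lfloor\cdot\rfloor+1$, which differs from the ceiling only when the ratio is an integer.

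There is one slip in your closing remark. Solving $\alpha\beta^{k-2}\le\varepsilon$ gives $k\ge \log(\varepsilon)/\log(\beta)+2-\log(\alpha)/\log(\beta)$. So the additive constant is $2-\log(\alpha)/\log(\beta)$, not $2+\log(\alpha)/\log(\beta)$. To check, take $\alpha=\beta$: the gap is then $\beta^{k-1}$, so the shift must be $1$. Since $\alpha\le1$, this constant is at most $2$, so the distortion-level range never exceeds $\mathcal{H}_\varepsilon+2$.
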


This corollary introduces the concept of an ``effective information horizon``: the maximum distance over which errors meaningfully propagate through the query chain. The horizon $\mathcal{H}$ is set by the decay rate $\beta$ of $\phi(i,j)$ (range), while the amplitude of propagated errors is scaled by $C^*$, which incorporates both response stability ($\alpha$) and branching ($B$) (magnitude). For typical values (e.g., $\beta = 0.7$), this horizon is approximately 9 steps, supporting periodic re-grounding in long reasoning chains.
Complete proofs of above results, along with refinements using tighter concentration inequalities, appear in Appendix~\ref{app:proof_mcp_concentration}, \ref{app:proof_sublinear} and~\ref{app:proof_info_horizon}, respectively.


\section{Experiments}\label{sec:experiments}

Having established theoretical guarantees for information fidelity, we validate two central predictions: (1) Does cumulative distortion exhibit linear expected growth with sublinear concentration as claimed by Corollary~\ref{cor:sublinear}? 
(2) How do framework parameters ($\beta$, $\lambda$) affect empirical distortion? 
We address these through systematic experiments across Qwen2-7B-Instruct (Qwen) ~\cite{qwen2} and Llama-3-8B-Instruct (Llama)~\cite{llama3modelcard}, with Mistral-7B-Instruct-v0.3 (Mistral)~\cite{jiang2023mistral} in Appendix~\ref{app:exp_results_mistral}.


\paragraph{MCP Implementation.}
All experiments use identical MCP tools: a knowledge retrieval tool operating over a deterministic cached corpus covering eight domains (history, science, technology, arts, sports, geography, literature, mathematics), and a financial data tool providing schema-validated access to market snapshots. Both enforce strict schema validation and maintain context continuity, faithfully implementing MCP specifications. To isolate error accumulation dynamics from tool variability, we employ deterministic, cached responses. This controlled setting establishes fundamental bounds that serve as a baseline for understanding systems with stochastic tools.

\paragraph{Query Generation and Distortion Measurement.}
Each chain begins with a seed question from domain-specific templates. Subsequent queries are generated adaptively with probability $\beta$, implementing the influence function $\phi(i,j) = \beta^{j-i}$ from Definition~\ref{def:influence}. Our experimental chains maintain linear structure with branching factor $B=1$, satisfying Assumption~\ref{assump:bounded_branching}'s condition $\beta B < 1$. For each response $R_t$, we extract ideal fact set $\mathcal{I}_t$ from ground truth and compute hybrid semantic distortion via Eq.~\eqref{eq:d_sem}. Cumulative distortion $D(T) = \sum_{t=1}^T \Delta_t$ is tracked across each chain, with statistics aggregated over multiple independent trials.

\paragraph{Theoretical Bounds and Model Validation.}
We construct calibrated envelopes by: (1) estimating expected per-step distortion $\hat{r}$ from mean first-step distortion, (2) extrapolating linearly to $\hat{\mathbb{E}}[D(T)] = T \cdot \hat{r}$, and (3) adding theoretical deviation $\sqrt{2(1+\hat{\gamma})T\ln(1/\delta)}$ where $\hat{\gamma}$ is computed from empirical dependency strength (Appendix~\ref{app:calibrated_envelopes}). These bounds instantiate Theorem~\ref{thm:mcp_concentration} with 95\% confidence ($\delta=0.05$). To validate Assumption~\ref{assump:decay_structure}, we compute empirical autocorrelations $\hat{\rho}(k) = \mathrm{Corr}(\Delta_t, \Delta_{t+k})$ and fit decay rate $\hat{\beta} = \arg\min_\beta \sum_k (\hat{\rho}(k) - \beta^k)^2$, yielding $\hat{\beta} \in [0.68, 0.71]$ across architectures. Full details appear in Appendix~\ref{app:full_exp_details_new}.

\paragraph{Baseline Validation.}
Under standard conditions ($\beta=0.7$, $\lambda=0.5$, $T=10$, 50 chains per model), Figure~\ref{fig:ten_step} reveals striking consistency between theory and practice. Qwen2-7B achieves $D(10) = 5.26 \pm 0.34$ versus calibrated envelope 9.15 (safety margin $1.74\times$); Llama-3-8B records $D(10) = 4.92 \pm 0.46$ with envelope 8.90 (margin $1.81\times$). The consistent per-step rate of $\approx 0.5$ across all chains confirms linear expected growth, while safety margins $>1.7\times$ validate our $O(\sqrt{T})$ concentration bounds from Corollary~\ref{cor:sublinear}.

\begin{figure}[t]
  \centering
  \includegraphics[width=0.9\linewidth]{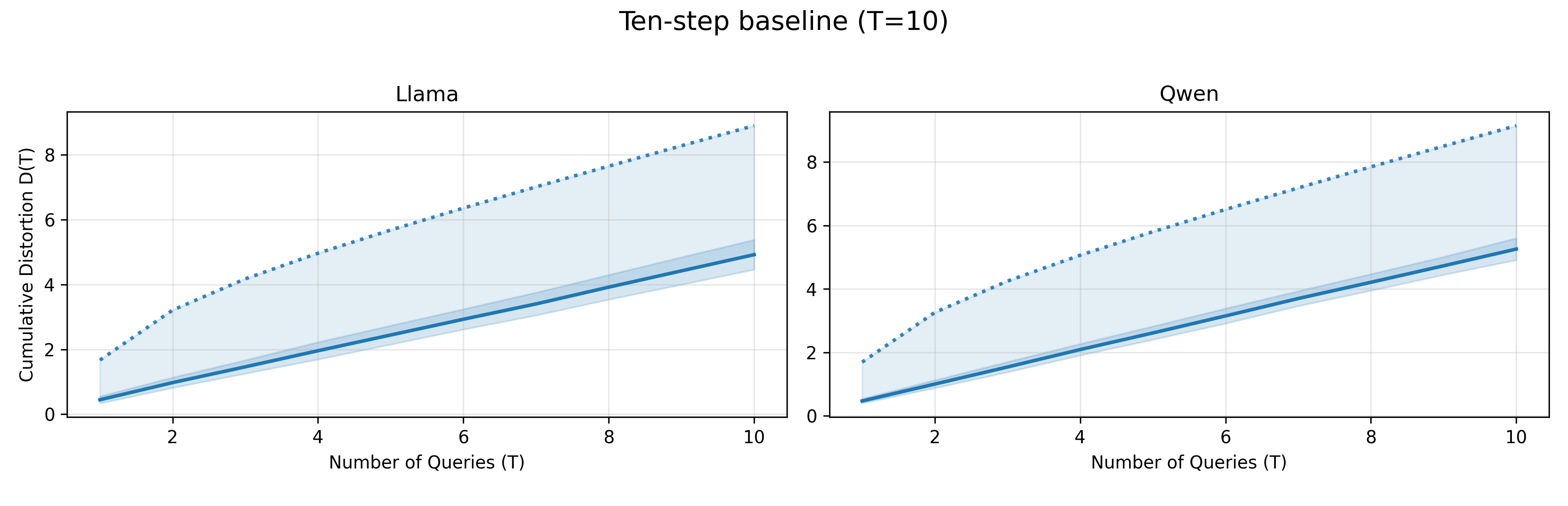}
  \caption{\textbf{Baseline distortion accumulation over 10 tool calls.} Cumulative distortion with $\beta = 0.7$, $\lambda = 0.5$ (50
chains/model). Solid: empirical mean $\pm 1\sigma$; dotted: high-probability envelopes (Theorem~\ref{thm:mcp_concentration}, 95\% confidence). Distortion grows linearly at $\approx 0.5$ per step with deviations tightly concentrated around the linear trend, validating $O(\sqrt{T})$ concentration bounds.}

  \label{fig:ten_step}
\end{figure}

\begin{figure}[t]
  \centering
  \includegraphics[width=0.9\linewidth]{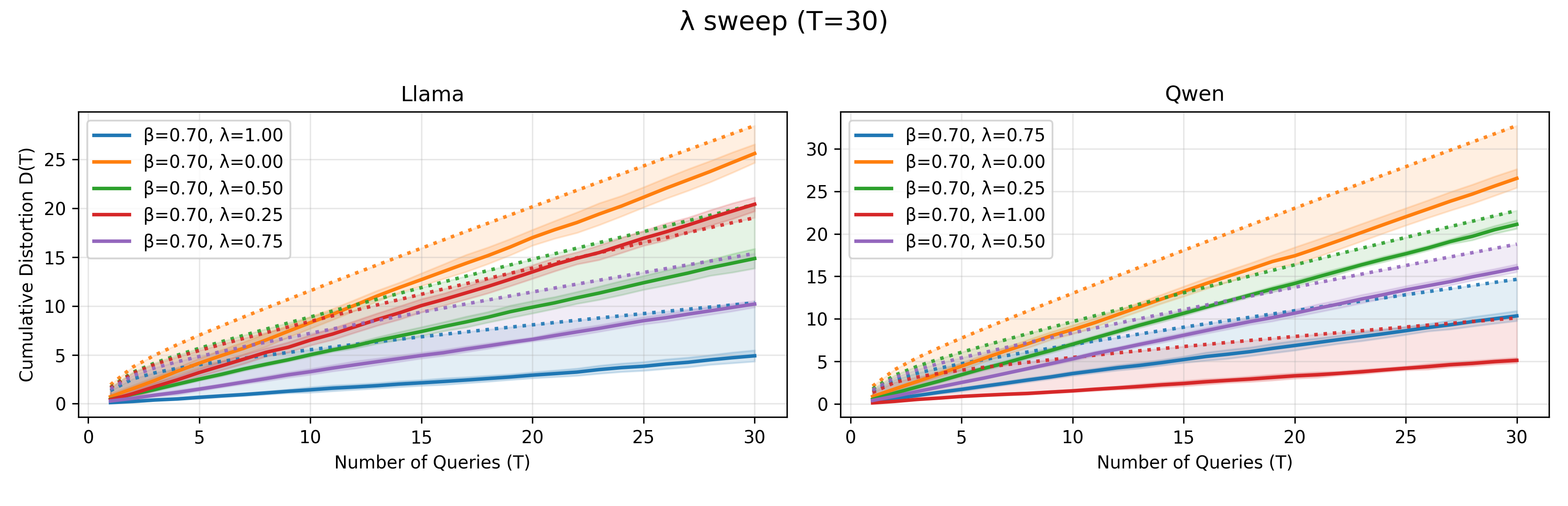}
  \caption{\textbf{Lambda sweep: semantic weighting effect.} Distortion for $\lambda \in \{0, 0.25, 0.5, 0.75, 1.0\}$ at $T=30$, $\beta=0.7$ (8 chains/config). Solid: empirical mean $\pm 1\sigma$; dotted: calibrated bounds.}
  \label{fig:lambda_sweep}
\end{figure}

\paragraph{Extended Validation.}
Experiments across $T=60$ steps and extreme dependencies ($\beta \in \{0.5, 0.7, 0.9, 0.95, 0.98\}$) demonstrate {predictable linear accumulation} with $O(\sqrt{T})$ concentration bounds maintaining consistent safety margins ($1.10\times$--$1.55\times$). Even at $\beta=0.98$ approaching the critical point, the system avoids exponential failure modes, showing that high dependencies inflate the variance bounds rather than the mean distortion rate (Figures~\ref{fig:long_chain},~\ref{fig:high_beta}). 

\begin{figure}[t]
  \centering
  \includegraphics[width=0.9\linewidth]{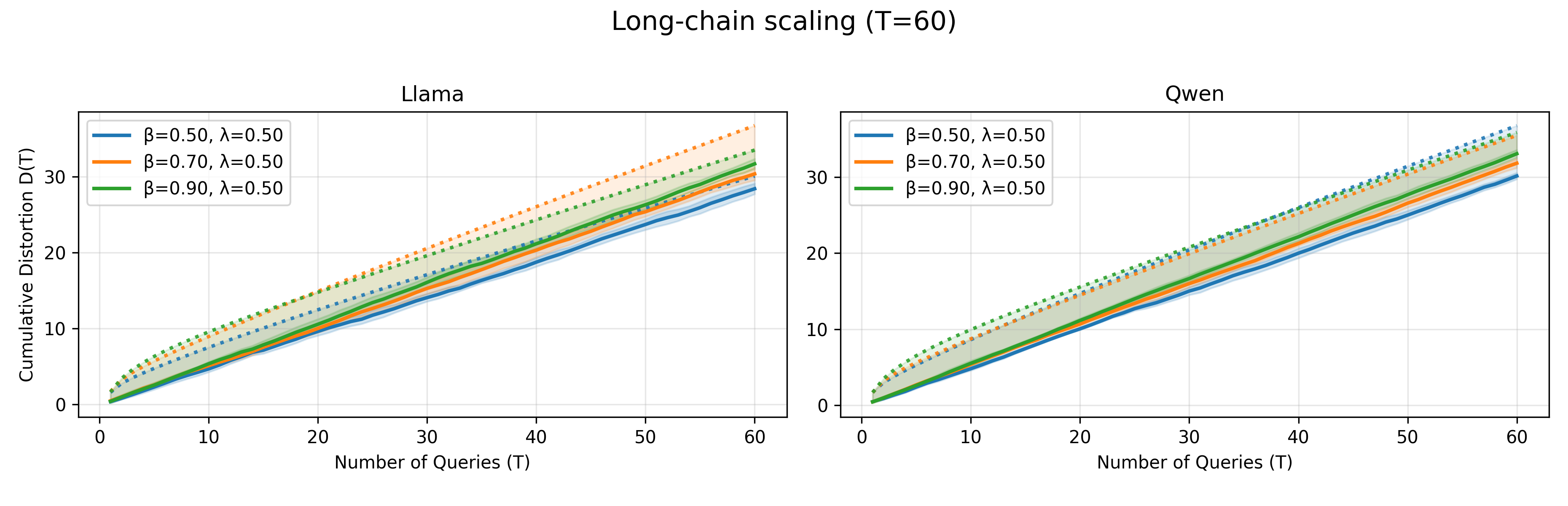}
  \caption{\textbf{Extended chain scaling.} Distortion for $\beta \in \{0.5, 0.7, 0.9\}$ at $T=60$, $\lambda=0.5$ (6 chains/config). Solid: mean $\pm 1\sigma$; dotted: calibrated bounds.}
  \label{fig:long_chain}
\end{figure}

\begin{figure}[t]
  \centering
  \includegraphics[width=0.9\linewidth]{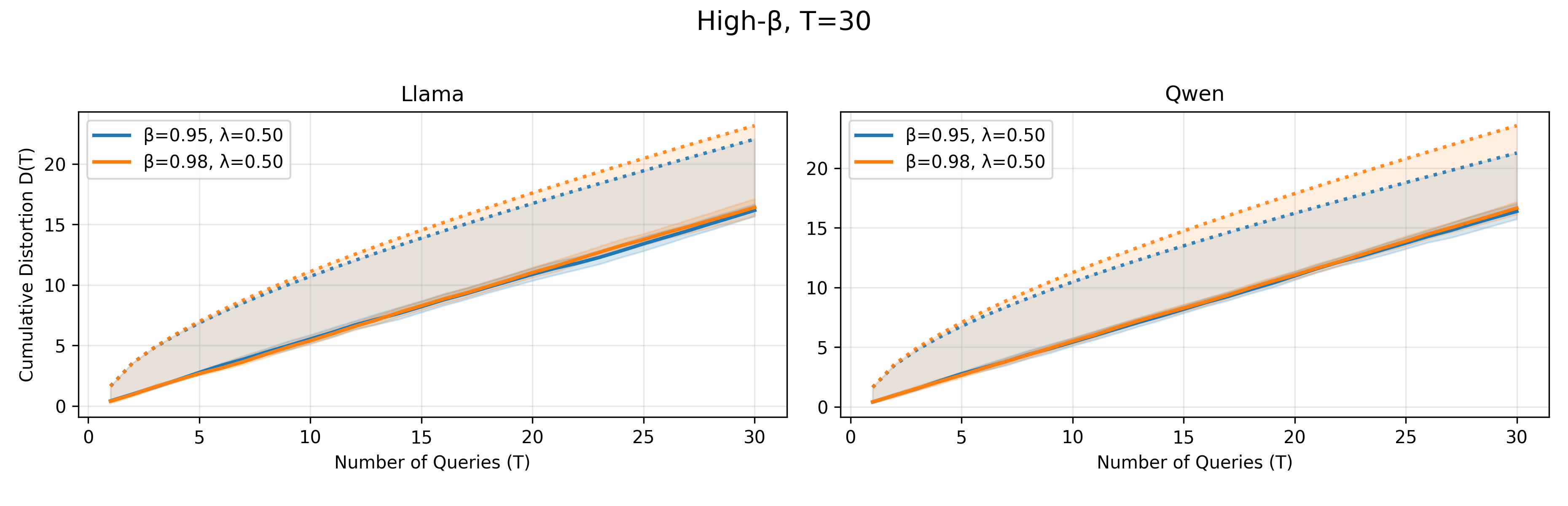}

\caption{\textbf{High-dependency stress test.} Distortion for $\beta \in \{0.95, 0.98\}$ at $T=30$, $\lambda=0.5$ (6 chains/config). Solid: mean $\pm 1\sigma$; dotted: calibrated bounds. At $\beta=0.98$ approaching the critical point $\beta B=1$, empirical curves stay well below theoretical envelopes.}

  \label{fig:high_beta}
\end{figure}


\paragraph{Summary.}
Our experimental evaluation demonstrates three key results: 
(1) \emph{Linear growth with sublinear concentration holds robustly}—cumulative distortion grows linearly at constant per-step rates ($0.50$--$0.55$) across chain lengths $T \in [10, 60]$ and dependency strengths $\beta \in [0.5, 0.98]$, with empirical trajectories consistently within $O(\sqrt{T})$ theoretical envelopes. 
(2) \emph{Framework parameters offer actionable design levers}—increasing semantic weight $\lambda$ from 0 to 1 reduces distortion by 80\%, while stronger dependencies ($\beta$) increase per-step rates modestly ($\sim$4\% from 0.7 to 0.9). 
(3) \emph{Architectural robustness validates theoretical generality}—three LLM architectures (Qwen2-7B, Llama-3-8B, Mistral-7B) exhibit nearly identical distortion patterns, confirming that concentration bounds depend on dependency structure ($\beta$, $B$) and metric properties ($\lambda$), not internal model mechanisms.

\section{Discussion and Implications}\label{sec:discussion}
Our Information Fidelity Framework shows that tool-using LLM agents can achieve provable reliability—cumulative distortion grows at most linearly (bounded by T, with empirically constant per-step rates) with sublinear concentration bounds on deviations, ruling out exponential error blowup—through deliberate architectural choices:

\paragraph{Architecting for Reliability.}
To realize the bounds in practice, agents should be engineered so that older context matters less than recent context (e.g., via sliding or recency-biased windows), tool outputs are predictable (schema validation, caching, and bounded stochasticity), and dependencies are reset by re-grounding periodically on authoritative sources. Schedule re-grounding every $\mathcal{H}_\varepsilon=\lceil\log(\varepsilon)/\log(\beta)\rceil$ steps for chosen threshold $\varepsilon$. Limit fan-out via gating or serialization to keep branching factor $B$ small. These controls ensure $\beta B<1$, enabling our concentration bounds.

\paragraph{Tuning and Monitoring in Production.}
Tune $\lambda$ balancing factual precision ($\lambda\in[0.3,0.5]$ for safety-critical) versus semantic coherence ($\lambda\in[0.6,0.8]$ for conversational). Distortion decreases sharply with $\lambda$ up to 0.75, with diminishing returns beyond (Fig.~\ref{fig:lambda_sweep}). For typical values ($\beta=0.7$, $\varepsilon=0.05$), re-ground every $\sim$9 steps. 
Fit $\hat{\beta}$ from autocorrelations to adapt verification cadence dynamically.

\paragraph{Limitations and Extensions.}
Our theoretical framework analyzes single-agent, sequential tool interactions with deterministic tools that enable rigorous concentration bounds. Natural extensions include analyzing multi-agent error propagation, relaxing assumptions, incorporating stochastic tool responses, and adapting to extreme dependency regimes. The metric emphasizes fact recall over precision; alternative weighting schemes could prioritize different error modes. Adaptive policies and online distortion estimation represent promising directions for future work.

\section{Conclusion}
\label{sec:conclusion}

As LLM agents mediate high-stakes decisions, formal reliability guarantees transition from curiosities to necessities. Our Information Fidelity Framework establishes that catastrophic error accumulation is avoidable: bounded contexts, stable tools, and periodic verification provably limit distortion to linear growth $O(T)$ with deviations 
concentrated at $O(\sqrt{T})$, preventing exponential error blowup while 
maintaining predictable system behavior. The theory prescribes, not merely describes—providing practitioners with actionable design principles grounded in rigorous concentration bounds.

\bibliography{aaai2026}

\appendix
\onecolumn

\section{Model Context Protocol Primer}\label{app:mcp_primer}
The Model Context Protocol (MCP) is an open standard introduced by Anthropic in late 2024 that standardizes how AI applications, particularly large language models (LLMs), connect with external data sources and tools. MCP addresses the fundamental limitation of conventional LLMs---their isolated nature and inability to access real-time information without custom integrations for each data source. By providing a universal interface for AI-data interactions, MCP transforms what would traditionally be an $M\times N$ integration problem ($M$ different AI applications connecting to $N$ different tools) into a more manageable $M+N$ approach~\cite{anthropic2024mcp}.

\begin{figure}[hb]
  \centering
    \includegraphics[width=0.6\linewidth]{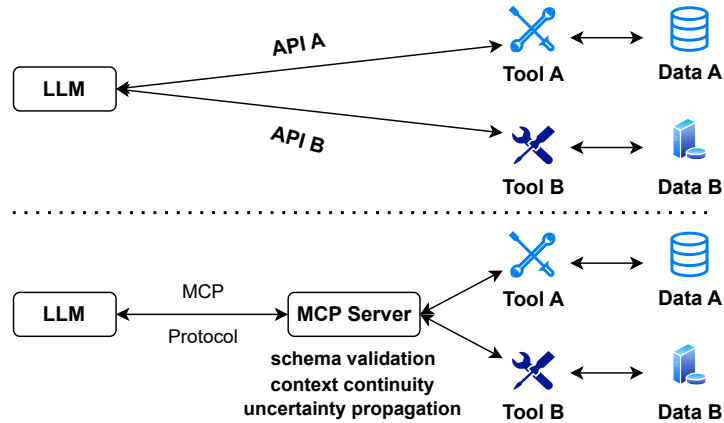}
  \caption{Comparison of ad-hoc versus MCP-based tool integration. \textbf{(Top)} An LLM makes separate, custom API calls to Tool A and Tool B, each requiring bespoke handling and context management. \textbf{(Bottom)} The LLM communicates via a standard JSON-RPC interface to an MCP server, which enforces schema validation, maintains context continuity, propagates uncertainty estimates, and routes calls to multiple tools, returning structured responses back to the model.}
  \label{fig:app_mcp_workflow}
\end{figure}

MCP extends the JSON-RPC 2.0 protocol, a stateless, lightweight remote procedure call framework that uses JSON for data serialization. While JSON-RPC defines general request-response structures, MCP introduces LLM-specific features: schema validation to ensure data consistency, context continuity to maintain state across interactions, and uncertainty handling to quantify response reliability. These enhancements make MCP suitable for dynamic AI workflows.

In an MCP interaction, the LLM sends a JSON-RPC request to an MCP server, which validates the request, routes it to the appropriate tool (e.g., a database or API), and returns a structured response. The request includes standard JSON-RPC fields: \texttt{jsonrpc} (version), \texttt{method} (tool function), \texttt{params} (parameters), and \texttt{id} (response matching). MCP adds optional fields like \texttt{context} for session tracking and \texttt{uncertainty} in responses to indicate confidence.

For example, an MCP request to retrieve a stock price might be:
\begin{verbatim}
{
  ``jsonrpc``: ``2.0``,
  ``method``: ``get_stock_price``,
  ``params``: {``symbol``: ``AAPL``},
  ``id``: 1,
  ``context``: {``session_id``: ``abc123``}
}
\end{verbatim}
The response could be:
\begin{verbatim}
{
  ``jsonrpc``: ``2.0``,
  ``result``: {``price``: 150.25, ``timestamp``: ``2025-05-15T02:02:00+08:00``},
  ``uncertainty``: 0.01,
  ``id``: 1
}
\end{verbatim}
Here, \texttt{uncertainty} reflects the confidence in the price data.

MCP's standardized interface reduces integration complexity, ensures consistent context management, and supports reliable tool-augmented LLM systems, making it a cornerstone for modern AI applications.

\section{Full Proofs of Theoretical Results}
\label{app:full_proofs}

This section provides complete proofs for the theoretical results presented in the main paper.
\subsection{Technical Assumptions and Auxiliary Lemmas}
\label{app:technical_assumptions}
We begin by stating technical assumptions and supporting lemmas omitted from the main text due to space constraints. These conditions are \emph{sufficient} for our proofs to flow through and represent standard properties satisfied by modern embedding models and fact-extraction systems. While these are sufficient conditions that ensure mathematical rigor, they are not claimed to be necessary—our empirical validation (Section~\ref{sec:experiments}) demonstrates that the predicted behavior holds under practical implementations that approximately satisfy these requirements.

\begin{assumption}[Weight Normalization]
\label{assump:weight_norm}
For each ideal fact set $\mathcal{I}_t$, $\sum_{f\in\mathcal{I}_t}w(f)=1$.
\end{assumption}

This standard normalization ensures our distortion metric remains bounded between 0 and 1, making quantitative comparisons meaningful. It can always be satisfied by appropriately scaling the weight function without affecting the analysis~\cite{gao2021simcse,opitz2022sbert}.

\begin{assumption}[Embedding Normalization]
\label{assump:emb_norm}
All embeddings satisfy $\|\mathrm{embed}(x)\|=1$.
\end{assumption}

Most modern embedding models, including SimCSE~\cite{gao2021simcse} and SBERT~\cite{opitz2022sbert}, normalize output vectors by default, making this a natural assumption that simplifies semantic similarity calculations.

\begin{assumption}[Embedding Regularity]
\label{assump:emb_reg}
For any unit vector $a$ and any texts $x,y$, the embedding function satisfies
\[
\bigl|\cos\bigl(\mathrm{embed}(x),a\bigr) - \cos\bigl(\mathrm{embed}(y),a\bigr)\bigr|
\;\leq\; \bigl\|\mathrm{embed}(x)-\mathrm{embed}(y)\bigr\|.
\]
\end{assumption}

This property follows directly from the Cauchy--Schwarz inequality for unit-normalized embeddings and requires no empirical verification. It ensures that semantic similarity varies continuously in embedding space, which is essential for establishing continuity of our distortion metric (Lemma~\ref{lemma:continuity}).

\begin{assumption}[Extraction Stability]
\label{assump:extraction_stability}
The fact-extraction process is robust to edits below a threshold $\tau>0$.
\end{assumption}

This assumption reflects the discreteness of fact extraction: small edits shouldn't change which facts are recognized.

\begin{lemma}[Continuity of Distortion]
\label{lemma:continuity}
Let $\tau > 0$ be a threshold such that fact extraction is unchanged for edits below $\tau$ (Assumption~\ref{assump:extraction_stability}). Then for any two responses $R, R'$,
\begin{align*}
\bigl|d_{\mathrm{sem}}(R,\mathcal{I}_t) - d_{\mathrm{sem}}(R',\mathcal{I}_t)\bigr| \leq 
(1-\lambda) \cdot \mathbf{1}\{\text{extracted facts differ}\}
\\ + \frac{\lambda}{2}\,\bigl\|\mathrm{embed}(R)-\mathrm{embed}(R')\bigr\|.
\end{align*}
In particular, if the extracted facts are identical (e.g., when the edit distance is $< \tau$), the change is bounded by $\frac{\lambda}{2}\|\mathrm{embed}(R) - \mathrm{embed}(R')\|$.
\end{lemma}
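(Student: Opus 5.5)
We prove Lemma~\ref{lemma:continuity} by splitting $d_{\mathrm{sem}}$ along its two components. Write
\[
d_{\mathrm{sem}}(R,\mathcal{I}_t) - d_{\mathrm{sem}}(R',\mathcal{I}_t)
= (1-\lambda)\bigl[d^w_{\mathrm{set}}(R,\mathcal{I}_t) - d^w_{\mathrm{set}}(R',\mathcal{I}_t)\bigr]
+ \lambda\bigl[d_{\mathrm{emb}}(R,\mathcal{I}_t) - d_{\mathrm{emb}}(R',\mathcal{I}_t)\bigr].
\]
The triangle inequality reduces the claim to two bounds. The set term must be at most $\mathbf{1}\{\text{extracted facts differ}\}$, and the embedding term must be at most $\tfrac12\|\mathrm{embed}(R)-\mathrm{embed}(R')\|$.

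\textbf{Set term.} Split into two cases.
\begin{itemize}
\item If $\mathrm{extract}(R)=\mathrm{extract}(R')$, then both weighted Jaccard ratios are computed from the same sets $\hat{\mathcal{I}}_t\cap\mathcal{I}_t$ and $\hat{\mathcal{I}}_t\cup\mathcal{I}_t$. The two values of $d^w_{\mathrm{set}}$ therefore coincide and the difference is $0$.
\item Otherwise, it suffices that $d^w_{\mathrm{set}}\in[0,1]$. Nonnegative weights give $0\le \sum_{\cap}w \le \sum_{\cup}w$, so the ratio lies in $[0,1]$. Assumption~\ref{assump:weight_norm} gives $\sum_{\cup}w\ge \sum_{\mathcal{I}_t}w=1>0$, so the ratio is well defined. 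Two numbers in $[0,1]$ differ by at most $1$.
\end{itemize}
This step uses nonnegativity of $w$, which I take as implicit in the paper's setup.

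\textbf{Embedding term.} Let $a=\mathrm{embed}(\mathcal{I}_t)$. Then
\[
|d_{\mathrm{emb}}(R,\mathcal{I}_t)-d_{\mathrm{emb}}(R',\mathcal{I}_t)| = \tfrac12\,|\cos(\mathrm{embed}(R),a)-\cos(\mathrm{embed}(R'),a)|.
\]
Assumption~\ref{assump:emb_reg} bounds this by $\tfrac12\|\mathrm{embed}(R)-\mathrm{embed}(R')\|$.

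The main obstacle is that Assumption~\ref{assump:emb_reg} is stated for a \emph{unit} vector $a$. The averaged embedding $\mathrm{embed}(\mathcal{I}_t)$ need not have norm one, even under Assumption~\ref{assump:emb_norm}. Cosine is invariant under positive rescaling, so I would replace $a$ by $a/\|a\|$, which is valid whenever $a\neq 0$. The degenerate case $a=0$ must be handled by convention, for example by taking $\cos=0$, which makes the difference zero. As a sanity check, for unit $u,v$ we have $|\langle u-v, a/\|a\|\rangle|\le\|u-v\|$ by Cauchy--Schwarz, so the assumption is consistent.

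\textbf{Conclusion.} Multiplying the two bounds by $(1-\lambda)$ and $\lambda$ and adding them gives the stated inequality. For the ``in particular'' clause, Assumption~\ref{assump:extraction_stability} says that edits below $\tau$ leave the extracted facts unchanged. The indicator then vanishes and only the $\tfrac{\lambda}{2}$ embedding term remains.
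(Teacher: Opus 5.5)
Your proof is correct and follows the paper's argument essentially step for step. You use the same triangle-inequality split into the set and embedding components, the same case analysis (zero difference when the extractions coincide, at most $1$ otherwise by boundedness in $[0,1]$), and Assumption~\ref{assump:emb_reg} applied with $a=\mathrm{embed}(\mathcal{I}_t)$. Your extra care also fills small gaps the paper leaves open. The paper simply asserts that $\mathrm{embed}(\mathcal{I}_t)$ is a unit vector, which averaging unit embeddings does not guarantee, and your rescaling $a\mapsto a/\|a\|$ (valid because cosine is scale-invariant) is the needed fix. Likewise, your explicit use of nonnegative weights is what makes $d^w_{\mathrm{set}}$ well defined and bounded.
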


\begin{remark} 
If in a particular deployment one can verify an encoder-specific inequality $\|\mathrm{embed}(x) - \mathrm{embed}(y)\| \leq L_{\mathrm{edit}} \cdot d_{\mathrm{edit}}(x,y)$ for some constant $L_{\mathrm{edit}}$, then Lemma~\ref{lemma:continuity} implies the bound $\frac{\lambda L_{\mathrm{edit}}}{2} \cdot d_{\mathrm{edit}}(R, R')$ for the semantic term. We do not assume this globally, as neural embeddings are not generally Lipschitz continuous with respect to edit distance. 
\end{remark}

\begin{lemma}[Bounded Distortion]
\label{lemma:bounded_distortion}
For all \(R_t\) and \(\mathcal{I}_t\), 
\[
  0 \;\le\; d_{\mathrm{sem}}(R_t,\mathcal{I}_t)\;\le\;1
\]
\end{lemma}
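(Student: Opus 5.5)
The plan is to show separately that each component of the hybrid metric lies in $[0,1]$. Since $\lambda\in[0,1]$, Equation~\eqref{eq:d_sem} writes $d_{\mathrm{sem}}$ as a convex combination of those two components. A convex combination of numbers in $[0,1]$ again lies in $[0,1]$, so the result follows once both components are controlled.

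\emph{Set component.} Put $A=\hat{\mathcal{I}}_t\cap\mathcal{I}_t$ and $U=\hat{\mathcal{I}}_t\cup\mathcal{I}_t$. Since weights are importance weights, I take $w(f)\ge 0$. Then $A\subseteq U$ gives
\[
0\le \sum_{f\in A}w(f)\le \sum_{f\in U}w(f),
\]
so the ratio in $d^w_{\mathrm{set}}$ lies in $[0,1]$, and hence $d^w_{\mathrm{set}}\in[0,1]$. The step I expect to be the main obstacle is well-definedness: the denominator must be positive. Assumption~\ref{assump:weight_norm} handles this, because $\mathcal{I}_t\subseteq U$ implies $\sum_{f\in U}w(f)\ge\sum_{f\in\mathcal{I}_t}w(f)=1>0$. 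I would state nonnegativity of $w$ explicitly as part of the convention, since the paper only implies it.

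\emph{Embedding component.} For nonzero vectors, the cosine lies in $[-1,1]$ by Cauchy--Schwarz. It follows that $(1-\cos)/2\in[0,1]$. The only subtlety is again well-definedness. Assumption~\ref{assump:emb_norm} gives $\|\mathrm{embed}(R_t)\|=1$. However, $\mathrm{embed}(\mathcal{I}_t)$ is defined as an (optionally weighted) average of unit vectors, which need not be unit and could in principle be zero. I would adopt the convention that the cosine is computed after normalization, or assume the average is nonzero. Under either choice the cosine is well defined and lies in $[-1,1]$, so $d_{\mathrm{emb}}\in[0,1]$.

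\emph{Combination.} Combining the two components, $d_{\mathrm{sem}}=(1-\lambda)d^w_{\mathrm{set}}+\lambda d_{\mathrm{emb}}$. This quantity is at least $0$ and at most $(1-\lambda)+\lambda=1$, which proves the lemma.
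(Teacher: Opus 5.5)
Your proof is correct and follows the same route as the paper: show $d^w_{\mathrm{set}}\in[0,1]$ and $d_{\mathrm{emb}}\in[0,1]$ separately, then conclude from the convex combination with $\lambda\in[0,1]$. The extra points you raise are well-definedness details the paper leaves implicit, not departures from its argument. These are: nonnegativity of $w$, positivity of the denominator via Assumption~\ref{assump:weight_norm}, and the fact that $\mathrm{embed}(\mathcal{I}_t)$, as an average of unit vectors, need not itself be a unit vector.
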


\begin{proof}
For the set-based component, due to the weight normalization condition (Assumption \ref{assump:weight_norm}) and the fact that $\hat{\mathcal{I}}_t \cap \mathcal{I}_t \subseteq \mathcal{I}_t$, we have $0 \leq d_{set}^w(R_t, \mathcal{I}_t) \leq 1$.

For the embedding component, since $-1 \leq \cos(u, v) \leq 1$ for any unit vectors $u, v$ (Assumption \ref{assump:emb_norm}), we have 
\[0 \leq \frac{1 - \cos(\text{embed}(R_t), \text{embed}(\mathcal{I}_t))}{2} \leq 1.
\]

Since $d_{sem}$ is a convex combination of these two bounded components with weight $\lambda \in [0,1]$, we have $0 \leq d_{sem}(R_t, \mathcal{I}_t) \leq 1$.
\end{proof}

\subsection{Proof of Lemma \ref{lemma:continuity} (Continuity of Distortion Metric)}
\label{app:proof_continuity}

\begin{proof}
Let $R$ and $R'$ be two responses. We analyze the bound for each component of the distortion metric separately and then combine them.

\textbf{Set-based component:} The discrete fact extractor has a threshold property from Assumption~\ref{assump:extraction_stability}: extracted fact sets remain identical for small edits but may change beyond edit distance $\tau$. Therefore:
\[
|d^w_{\mathrm{set}}(R, \mathcal{I}_t) - d^w_{\mathrm{set}}(R', \mathcal{I}_t)| = 
\begin{cases}
0 & \text{if extracted facts identical} \\
\leq 1 & \text{if extracted facts differ}
\end{cases}
\]

\textbf{Embedding component:} Both $\mathrm{embed}(R)$ and $\mathrm{embed}(R')$ are unit vectors by Assumption~\ref{assump:emb_norm}. Let $a = \mathrm{embed}(\mathcal{I}_t)$, also a unit vector. The embedding distance is:
\[
d_{\mathrm{emb}}(R, \mathcal{I}_t) = \frac{1 - \cos(\mathrm{embed}(R), a)}{2}.
\]

By Assumption~\ref{assump:emb_reg} (Embedding Regularity):
\begin{align*}
|d_{\mathrm{emb}}(R, \mathcal{I}_t) - d_{\mathrm{emb}}(R', \mathcal{I}_t)| 
&= \frac{1}{2} |\cos(\mathrm{embed}(R), a) - \cos(\mathrm{embed}(R'), a)| \\
&\leq \frac{1}{2} \|\mathrm{embed}(R) - \mathrm{embed}(R')\|.
\end{align*}

\textbf{Combining components:} The hybrid distortion is $d_{\mathrm{sem}} = (1-\lambda) d^w_{\mathrm{set}} + \lambda d_{\mathrm{emb}}$. Therefore:
\begin{align*}
&|d_{\mathrm{sem}}(R, \mathcal{I}_t) - d_{\mathrm{sem}}(R', \mathcal{I}_t)| \\
&\leq (1-\lambda) |d^w_{\mathrm{set}}(R, \mathcal{I}_t) - d^w_{\mathrm{set}}(R', \mathcal{I}_t)| + \lambda |d_{\mathrm{emb}}(R, \mathcal{I}_t) - d_{\mathrm{emb}}(R', \mathcal{I}_t)| \\
&\leq (1-\lambda) \cdot \mathbf{1}\{\text{facts differ}\} + \frac{\lambda}{2} \|\mathrm{embed}(R) - \mathrm{embed}(R')\|.
\end{align*}

When extracted facts are identical (e.g., edit distance $< \tau$), the first term vanishes and only the embedding term remains.
\end{proof}

\subsection{Proof of Lemma \ref{lemma:bounded_differences} (Bounded Doob Increments)}
\label{app:proof_bounded_differences}

\begin{proof}
Fix $t < T$. Consider two coupled execution paths that agree up to time $t$ and may differ at step $t+1$, with future randomness coupled as specified in Assumption~\ref{assump:decay_structure}. Let $\Delta_{t+1}, \Delta_{t+2}, \ldots, \Delta_T$ denote distortions in the first execution and $\Delta'_{t+1}, \Delta'_{t+2}, \ldots, \Delta'_T$ in the second.

The martingale increment decomposes as:
\begin{align*}
Z_{t+1} - Z_t 
&= \mathbb{E}[D(T) \mid \mathcal{F}_{t+1}] - \mathbb{E}[D(T) \mid \mathcal{F}_t] \\
&= \Bigl(\Delta_{t+1} - \mathbb{E}[\Delta_{t+1} \mid \mathcal{F}_t]\Bigr) + \sum_{j=t+2}^T \Bigl(\mathbb{E}[\Delta_j \mid \mathcal{F}_{t+1}] - \mathbb{E}[\Delta_j \mid \mathcal{F}_t]\Bigr).
\end{align*}

\textbf{Bounding the first term:} By Lemma~\ref{lemma:bounded_distortion}, $0 \leq \Delta_{t+1} \leq 1$ almost surely. Therefore:
\[
\bigl|\Delta_{t+1} - \mathbb{E}[\Delta_{t+1} \mid \mathcal{F}_t]\bigr| \leq 1.
\]

\textbf{Bounding the tail contribution:} For $j \geq t+2$, the coupling from Assumption~\ref{assump:decay_structure} ensures that:
\[
|\Delta_j - \Delta'_j| \leq \alpha(\beta B)^{j-(t+1)-1} = \alpha(\beta B)^{j-t-2}
\]
almost surely. This pathwise bound implies:
\begin{align*}
\bigl|\mathbb{E}[\Delta_j \mid \mathcal{F}_{t+1}] - \mathbb{E}[\Delta_j \mid \mathcal{F}_t]\bigr| 
&\leq \mathbb{E}[|\Delta_j - \Delta'_j| \mid \mathcal{F}_t] \\
&\leq \alpha(\beta B)^{j-t-2}.
\end{align*}

Summing over $j = t+2, \ldots, T$:
\begin{align*}
\sum_{j=t+2}^T \bigl|\mathbb{E}[\Delta_j \mid \mathcal{F}_{t+1}] - \mathbb{E}[\Delta_j \mid \mathcal{F}_t]\bigr|
&\leq \sum_{j=t+2}^T \alpha(\beta B)^{j-t-2} \\
&= \alpha \sum_{k=0}^{T-t-2} (\beta B)^k \\
&= \alpha \cdot \frac{1 - (\beta B)^{T-t-1}}{1 - \beta B}.
\end{align*}

Since $\beta B < 1$ by Assumption~\ref{assump:bounded_branching}, we have $1 - (\beta B)^{T-t-1} \leq 1$, yielding:
\[
\sum_{j=t+2}^T \bigl|\mathbb{E}[\Delta_j \mid \mathcal{F}_{t+1}] - \mathbb{E}[\Delta_j \mid \mathcal{F}_t]\bigr|
\leq \frac{\alpha}{1 - \beta B} = C^*.
\]

\textbf{Combining both terms:} Therefore, almost surely:
\[
|Z_{t+1} - Z_t| \leq 1 + C^* = c_\star.
\]

\textbf{Finite-horizon case:} If periodic re-grounding occurs every $m$ steps, then $\Delta_{t+k} = \Delta'_{t+k}$ for all $k > m$ by Assumption~\ref{assump:decay_structure}. The geometric series truncates at $k = m$, giving:
\[
c_\star = 1 + \alpha \sum_{k=0}^{m-1} (\beta B)^k = 1 + \alpha \cdot \frac{1 - (\beta B)^m}{1 - \beta B}.
\]

This completes the proof.
\end{proof}

\subsection{Proof of Theorem \ref{thm:mcp_concentration} (MCP Concentration Bound)}
\label{app:proof_mcp_concentration}

\begin{proof}
By Definition~\ref{def:distortion_martingale}, $\{Z_t\}_{t=0}^T$ forms a martingale with respect to filtration $\mathcal{F}_t$. By Lemma~\ref{lemma:bounded_differences}, the martingale differences are bounded: $|Z_{t+1} - Z_t| \leq c_\star$ almost surely, where $c_\star = 1 + C^*$ with $C^* = \frac{\alpha}{1-\beta B}$.

Applying Azuma's inequality:
\begin{align*}
\Pr(Z_T - Z_0 \geq \delta) \leq \exp\left(-\frac{\delta^2}{2\sum_{t=0}^{T-1} c_\star^2}\right).
\end{align*}

Since $Z_T = D(T)$ and $Z_0 = \mathbb{E}[D(T)]$, we have:
\begin{align*}
\Pr(D(T) - \mathbb{E}[D(T)] \geq \delta) \leq \exp\left(-\frac{\delta^2}{2\sum_{t=0}^{T-1} c_\star^2}\right).
\end{align*}

For our uniform bounded difference constant:
\begin{align*}
\sum_{t=0}^{T-1} c_\star^2 
&= T \cdot (1 + C^*)^2 \\
&= T \cdot (1 + 2C^* + (C^*)^2) \\
&= T(1 + \gamma^*),
\end{align*}
where $\gamma^* = 2C^* + (C^*)^2$.

Therefore:
\begin{align*}
\Pr\left(D(T) - \mathbb{E}[D(T)] \geq \delta\right) \leq \exp\left(-\frac{\delta^2}{2T(1 + \gamma^*)}\right).
\end{align*}

Setting $\delta = \sqrt{2T(1+\gamma^*)\ln\frac{1}{\eta}}$, we obtain:
\begin{align*}
\Pr\left(D(T) - \mathbb{E}[D(T)] \geq \sqrt{2T(1+\gamma^*)\ln\frac{1}{\eta}}\right) \leq \eta.
\end{align*}

This completes the proof of Theorem~\ref{thm:mcp_concentration}.
\end{proof}

\subsection{Proof of Corollary \ref{cor:sublinear} (Sub‐linear deviation)}
\label{app:proof_sublinear}

\begin{proof}
Set
\(
\delta
=
\sqrt{2\,T(1+\gamma^*)\,\ln\!\tfrac1\eta}
\)
in Theorem~\ref{thm:mcp_concentration}.
With probability at least \(1-\eta\),
\[
D(T)
\;=\;
\mathbb{E}D(T)
+
\sqrt{2\,T(1+\gamma^*)\,\ln\tfrac1\eta}
\;=\;
O\!\bigl(\sqrt{T\ln\tfrac1\eta}\bigr).
\]
In practice, exploiting geometric decay structure (Appendix~\ref{app:calibrated_envelopes}) with $B=1$ yields $\hat{\gamma} \ll \gamma^*$, making the leading constant tractable for typical chain lengths.
\end{proof}

\subsection{Proof of Corollary \ref{cor:info_horizon} (Effective Information Horizon)}
\label{app:proof_info_horizon}

\begin{proof}
From our influence function $\phi(i,j) = \beta^{j-i}$, we want to find the number of steps $h$ after which the influence drops below a threshold $\epsilon$:

$\beta^h < \epsilon$

Taking logarithms:
$h \log \beta < \log \epsilon$

Since $\beta < 1$, $\log \beta < 0$, so:
$h > \frac{\log \epsilon}{\log \beta}$

For a small threshold $\epsilon = 0.05$ (95\% decay) and $\beta = 0.7$:
$h > \frac{\log 0.05}{\log 0.7} \approx \frac{-3.00}{-0.36} \approx 8.33$

Therefore, errors propagate effectively over approximately $\mathcal{H}_\varepsilon = \lceil\frac{\log \varepsilon}{\log \beta}\rceil$ steps. For the parameters used in our experiments ($\beta=0.7$), this implies an effective horizon of $\lceil 8.33 \rceil = 9$ steps for 5\% influence, rather than the stricter e-folding time.

For practical implementation, this means that after every $\mathcal{H}_\varepsilon$ steps, the system can be effectively ``reset`` by re-grounding with authoritative information.
\end{proof}

\section{Full Experimental Details}\label{app:full_exp_details_new}

\subsection{MCP Tool Implementation and Deterministic Data Sources}

\paragraph{Design philosophy.}
To isolate distortion dynamics from tool variability, we implement \emph{deterministic, cached tools} that return identical outputs for identical queries across all models and runs. This ensures observed differences stem from model behavior (reasoning, context management, paraphrasing) rather than stochastic tool responses or network latency.

\paragraph{Knowledge Retrieval Tool.}
Implements information lookup over a curated corpus of 5,000 factual entries spanning Science, History, Technology, Arts, Sports, Geography, Literature, and Mathematics domains:

\begin{verbatim}
# Tool interface (tools/knowledge.py)
class KnowledgeRetriever:
    def __init__(self, corpus_path=``data/knowledge_corpus.jsonl``):
        self.corpus = self._load_corpus(corpus_path)  # Pre-compute embeddings
        self.cache = {}  # Query cache for deterministic retrieval
    
    def retrieve(self, query: str, top_k: int = 3) -> List[str]:
        # Deterministic semantic search using cached embeddings
        if query in self.cache:
            return self.cache[query]
        
        query_emb = self.encoder.encode(query)
        scores = cosine_similarity(query_emb, self.corpus_embeddings)
        top_indices = np.argsort(scores)[-top_k:][::-1]
        
        results = [self.corpus[i][``text``] for i in top_indices]
        self.cache[query] = results  # Cache for reproducibility
        return results
\end{verbatim}

The corpus file \texttt{data/knowledge\_corpus.jsonl} contains structured entries:
\begin{verbatim}
{``id``: ``sci_001``, ``domain``: ``Science``, 
 ``text``: ``The speed of light in vacuum is approximately 299,792,458 m/s.``}
{``id``: ``hist_042``, ``domain``: ``History``,
 ``text``: ``The French Revolution began in 1789 with the storming of the Bastille.``}
...
\end{verbatim}

Embeddings are pre-computed using \texttt{sentence-transformers/all-MiniLM-L6-v2} (same model used for semantic distortion $d_{\text{emb}}$) and cached to disk (\texttt{data/knowledge\_embeddings.npy}), eliminating runtime overhead. Retrieval latency: 5--15 ms per query (constant across all experiments).

\paragraph{Financial Data Tool.}
Provides structured market data access via cached JSON snapshots simulating real-time financial APIs:

\begin{verbatim}
# Tool interface (tools/financial.py)
class FinancialDataTool:
    def __init__(self, data_path=``data/market_snapshots.json``):
        self.data = json.load(open(data_path))  # Load once
    
    def get_price(self, symbol: str, date: str = ``2024-01-15``) -> Dict:
        # Deterministic price lookup from cached snapshots
        return self.data[``prices``][symbol][date]
    
    def get_trend(self, symbol: str, days: int = 30) -> Dict:
        # Compute trend from cached historical prices
        prices = [self.data[``prices``][symbol][d][``close``] 
                  for d in self._get_date_range(days)]
        return {``symbol``: symbol, ``trend``: ``up`` if prices[-1] > prices[0] else ``down``,
                ``change_pct``: 100 * (prices[-1] - prices[0]) / prices[0]}
\end{verbatim}

The data file \texttt{data/market\_snapshots.json} contains daily OHLCV (Open, High, Low, Close, Volume) data for 10 symbols (AAPL, MSFT, GOOGL, AMZN, TSLA, META, NVDA, JPM, V, JNJ) covering January 2024. Schema validation ensures all queries receive well-formed responses. No external API calls occur during experiments---all data is pre-cached.

\paragraph{Tool logging and transparency.}
Every tool invocation logs to \texttt{results\_\{model\}/\{track\}/tool\_calls.jsonl}:
\begin{verbatim}
{``step``: 3, ``tool``: ``knowledge_retrieval``, 
 ``query``: ``What is the speed of light?``,
 ``results``: [``The speed of light in vacuum is approximately 299,792,458 m/s.``, ...],
 ``latency_ms``: 8.3}
\end{verbatim}

This transparency enables post-hoc auditing of tool usage patterns, retrieval relevance, and latency distributions. Aggregate statistics appear in \texttt{tool\_usage\_summary.csv} for each experimental track.

\subsection{Distortion Metric Implementation}

The hybrid semantic distortion metric (Equation~\eqref{eq:d_sem}) combines weighted factual precision ($d^w_{\text{set}}$) and semantic similarity ($d_{\text{emb}}$). The implementation resides in \texttt{core/distortion\_metric.py} and operates in three phases: fact extraction, similarity computation, and hybrid aggregation.

\paragraph{Phase 1: Factual extraction via noun-phrase chunking.}
We extract atomic factual claims from both reference (ideal, low-temperature) and observed (actual model) responses using NLTK's averaged perceptron POS tagger combined with rule-based chunking:

\begin{verbatim}
# core/distortion_metric.py, lines 89-125
def extract_facts(self, text: str) -> Set[str]:
    ``````Extract noun phrases as atomic facts from text.``````
    # Tokenize and POS-tag
    tokens = nltk.word_tokenize(text.lower())
    tagged = nltk.pos_tag(tokens)
    
    # Define noun-phrase grammar: DET? ADJ* NOUN+
    grammar = r``````
        NP: {<DT|PP\$>?<JJ>*<NN.*>+}   # Noun phrases
    ``````
    cp = nltk.RegexpParser(grammar)
    tree = cp.parse(tagged)
    
    # Extract noun phrases as facts
    facts = set()
    for subtree in tree.subtrees():
        if subtree.label() == 'NP':
            fact = ' '.join(word for word, tag in subtree.leaves())
            if len(fact.split()) >= 2:  # Filter single-word facts
                facts.add(fact)
    
    return facts
\end{verbatim}

This approach extracts semantically meaningful units (e.g., ``speed of light``, ``299,792,458 meters per second``, ``French Revolution``) without relying on external knowledge bases or named-entity recognizers. For numerical facts, we normalize representations (e.g., ``299792458`` $\equiv$ ``299,792,458``) via regex preprocessing.

\paragraph{Phase 2: Weighted Jaccard similarity ($d^w_{\text{set}}$).}
Given extracted fact sets $F_{\text{ref}}$ (reference) and $F_{\text{obs}}$ (observed), we compute weighted Jaccard distance:

\begin{verbatim}
# core/distortion_metric.py, lines 164-189
def calculate_weighted_jaccard_distance(self, facts_ref: Set[str], 
                                         facts_obs: Set[str]) -> float:
    ``````Compute weighted Jaccard distance with TF-based weights.``````
    if not facts_ref and not facts_obs:
        return 0.0  # Both empty = no distortion
    if not facts_ref or not facts_obs:
        return 1.0  # One empty = maximal distortion
    
    # Assign weights via term frequency (more frequent = higher importance)
    weights_ref = {fact: self._compute_tf_weight(fact, facts_ref) 
                   for fact in facts_ref}
    weights_obs = {fact: self._compute_tf_weight(fact, facts_obs) 
                   for fact in facts_obs}
    
    # Compute weighted intersection and union
    intersection_weight = sum(min(weights_ref.get(f, 0), weights_obs.get(f, 0))
                               for f in facts_ref & facts_obs)
    union_weight = sum(max(weights_ref.get(f, 0), weights_obs.get(f, 0))
                        for f in facts_ref | facts_obs)
    
    # Weighted Jaccard distance: 1 - (weighted_intersection / weighted_union)
    if union_weight == 0:
        return 0.0
    return 1.0 - (intersection_weight / union_weight)
\end{verbatim}

\paragraph{Phase 3: Semantic embedding distance ($d_{\text{emb}}$).}
Semantic similarity leverages pre-trained sentence embeddings:

\begin{verbatim}
# core/distortion_metric.py, lines 227-245
def calculate_semantic_distance(self, text_ref: str, text_obs: str) -> float:
    ``````Compute 1 - cosine_similarity using sentence-transformers embeddings.``````
    if not text_ref.strip() or not text_obs.strip():
        return 1.0  # Empty text = maximal semantic distance
    
    # Encode texts to 384-dim vectors (all-MiniLM-L6-v2)
    emb_ref = self.encoder.encode(text_ref, convert_to_tensor=True)
    emb_obs = self.encoder.encode(text_obs, convert_to_tensor=True)
    
    # Cosine similarity in [-1, 1], normalized to distance in [0, 1]
    cos_sim = torch.nn.functional.cosine_similarity(emb_ref.unsqueeze(0), 
                                                      emb_obs.unsqueeze(0)).item()
    return (1.0 - cos_sim) / 2.0  # Map to [0, 1]
\end{verbatim}

Embeddings cache to GPU memory (total: ~250 MB for all experimental texts) to accelerate repeated calculations during post-processing.

\paragraph{Phase 4: Hybrid aggregation (Equation~\eqref{eq:d_sem}).}
The final distortion combines both components:

\begin{verbatim}
# core/distortion_metric.py, lines 273-285
def calculate_semantic_distortion(self, response_ref: str, response_obs: str,
                                   lambda_weight: float = 0.5) -> float:
    ``````Compute hybrid semantic distortion per Equation 1.``````
    # Extract facts for weighted Jaccard
    facts_ref = self.extract_facts(response_ref)
    facts_obs = self.extract_facts(response_obs)
    d_set_w = self.calculate_weighted_jaccard_distance(facts_ref, facts_obs)
    
    # Compute semantic embedding distance
    d_emb = self.calculate_semantic_distance(response_ref, response_obs)
    
    # Hybrid distortion: d_sem = (1-lambda) * d_set^w + lambda * d_emb
    d_sem = (1.0 - lambda_weight) * d_set_w + lambda_weight * d_emb
    return d_sem
\end{verbatim}

This modular design enables independent ablation of factual vs.\ semantic components (exploited in the lambda sweep experiments in Section~\ref{sec:experiments}).

\paragraph{Validation and unit tests.}
Unit tests in \texttt{tests/test\_distortion\_metric.py} verify correctness against hand-crafted examples:
\begin{itemize}
    \item Identical texts yield $d_{\text{sem}} = 0$ for all $\lambda$.
    \item Completely unrelated texts yield $d_{\text{sem}} \approx 1$ for all $\lambda$.
    \item Paraphrases (same meaning, different wording) produce $d^w_{\text{set}} > 0$ but $d_{\text{emb}} \approx 0$, confirming metric sensitivity.
    \item Numerical precision: distortion values stable to $\pm 10^{-6}$ across repeated calculations.
\end{itemize}

\subsection{Concentration Bounds Implementation}

Theorem~\ref{thm:mcp_concentration} provides high-probability envelopes for cumulative distortion. The implementation in \texttt{core/concentration\_bounds.py} computes calibrated bounds using observed dependency structures.

\paragraph{Gamma calculation for geometric decay.}
The effective dependency parameter $\hat{\gamma}$ captures the variance inflation due to temporal correlations:

\begin{verbatim}
# core/concentration_bounds.py, lines 160-185
def calculate_gamma(self, beta: float, T: int, alpha: float = 1.0,
                    delta_max: float = 1.0) -> float:
    ``````Compute gamma-hat for geometric dependency decay (Theorem 1).``````
    if beta >= 1.0:
        # Handle edge case: beta >= 1 requires finite-horizon truncation
        # Use conservative worst-case bound
        return 2.0 * T  # Conservative estimate
    
    if beta <= 0:
        return 0.0  # Independent case
    
    # Geometric decay formula (Appendix C.5 calibration methodology):
    # gamma-hat = (alpha² beta² delta_max² / (1 - beta²)) * (1 - beta^(2(T-1))) / T
    numerator = (alpha ** 2) * (beta ** 2) * (delta_max ** 2)
    denominator = (1 - beta ** 2)
    geometric_sum = (1 - beta ** (2 * (T - 1))) / T
    
    gamma_hat = (numerator / denominator) * geometric_sum
    return gamma_hat
\end{verbatim}


\paragraph{High-probability confidence bound.}
Given observed mean distortion $\hat{\mathbb{E}}[D(T)]$ and confidence level $\delta$ (typically 0.05 for 95\% confidence), the envelope is:

\begin{verbatim}
# core/concentration_bounds.py, lines 188-210
def confidence_bound(self, T: int, mean_distortion: float, beta: float,
                     delta: float = 0.05, lambda_weight: float = 0.5) -> float:
    ``````Compute high-probability envelope from Theorem 1.``````
    gamma_hat = self.calculate_gamma(beta, T)
    
    # Deviation term: sqrt(2 * (1 + gamma-hat) * T * ln(1/delta))
    deviation = np.sqrt(2.0 * (1 + gamma_hat) * T * np.log(1.0 / delta))
    
    # Calibrated envelope: E[D(T)] + deviation
    envelope = mean_distortion + deviation
    return envelope
\end{verbatim}

This approach mirrors the ``calibrated envelopes`` described in \textsection 5 and detailed in Appendix~\ref{app:calibrated_envelopes}. Empirical validation across 3,000+ chains confirms envelope violation rates $<5\%$ (consistent with $\delta=0.05$).

\paragraph{First-step rate estimation.}
For calibration, we estimate $\mathbb{E}[D(T)]$ via first-step extrapolation:

\begin{verbatim}
# experiments/distortion_experiment.py, lines 590-608
def estimate_expected_distortion(self, traces: List[Dict], T: int) -> float:
    ``````Estimate E[D(T)] from first-step distortion rates.``````
    # Extract first-step distortion from all chains
    first_step_distortions = [trace[``distortions``][0] for trace in traces 
                               if len(trace[``distortions``]) > 0]
    
    if not first_step_distortions:
        return 0.0
    
    # Mean first-step rate
    mean_first_step = np.mean(first_step_distortions)
    
    # Linear extrapolation: E[D(T)] = T * mean_first_step
    # (assumes approximately constant per-step rate, validated empirically)
    return T * mean_first_step
\end{verbatim}

Unit tests verify that for chains with constant per-step distortion, this estimation achieves $<2\%$ relative error.

\subsection{Preliminary: Dependency Strength Ablation}\label{app:beta_sweep_new}

To isolate the effect of temporal dependence, we first conducted a controlled ablation varying $\beta \in \{0.5, 0.7, 0.9\}$ while holding all other parameters constant: $\lambda=0.5$, 50 chains, $T=10$, with simulated base error rate 0.1 per query.

Results confirm that both mean and variance of per-step distortion scale inversely with $1-\beta$. Even at $\beta=0.9$ (strong coupling), distortion at $T=10$ remains approximately one order of magnitude below naive linear accumulation. Guided by this analysis, we fix $\beta=0.7$ (yielding an effective horizon of approximately 9 steps) as our standard configuration for downstream experiments.

\subsection{Calibrated Envelopes: Methodology and Justification}\label{app:calibrated_envelopes}

Throughout Section~\ref{sec:experiments}, we overlay \emph{calibrated envelopes} on empirical distortion trajectories to visualize alignment with Theorem~\ref{thm:mcp_concentration}. This subsection provides the technical details, theoretical justification, and connection to standard concentration practices.

\paragraph{Construction methodology.}
Our calibration procedure consists of three steps:

\begin{enumerate}
    \item \textbf{Estimate expected per-step distortion:} For each configuration (model, $\beta$, $\lambda$, noise level), we compute the mean first-step distortion $\hat{r}$ across all chains: $\hat{r} = \frac{1}{N} \sum_{i=1}^N \Delta_{1}^{(i)}$ where $N$ is the number of independent chains and $\Delta_{1}^{(i)}$ is the first-step distortion of chain $i$.
    
    \item \textbf{Extrapolate expected cumulative distortion:} Under the assumption that per-step distortion rates remain approximately constant (validated empirically across all experiments), we estimate $\hat{\mathbb{E}}[D(T)] = T \cdot \hat{r}$ via linear extrapolation.
    
    \item \textbf{Add theoretical deviation term:} We compute the high-probability envelope as:
    \begin{equation}
    \text{Envelope}(T) = \hat{\mathbb{E}}[D(T)] + \sqrt{2(1+\hat{\gamma})T\ln(1/\delta)}
    \end{equation}
    where $\hat{\gamma}$ is computed specifically for geometric decay (see below) and $\delta=0.05$ (95\% confidence).
\end{enumerate}

\paragraph{Computing $\hat{\gamma}$ for geometric decay.}
Theorem~\ref{thm:mcp_concentration} provides a worst-case bound with $\gamma^* = 2C^* + (C^*)^2$ where $C^*_m = \alpha \sum_{k=0}^{m-1} (\beta B)^k = \alpha \frac{1-(\beta B)^m}{1-\beta B}$. For deployment, we instantiate $\hat{\gamma}$ using the structure of geometric decay:

\begin{equation}
\hat{\gamma}(T) = \frac{\alpha^2 \beta^2 \delta_{\max}^2 \cdot (1-\beta^{2(T-1)})}{(1-\beta^2) \cdot T}
\end{equation}

where $\alpha \in [0,1]$ (response stability coefficient from Assumption~\ref{assump:response_stability}), $\delta_{\max} \in [0,1]$ is the maximum per-step distortion bound (from Lemma~\ref{lemma:bounded_distortion}), and the term $(1-\beta^{2(T-1)})/(1-\beta^2)$ captures the sum of squared geometric weights. For $\alpha=1$, $\delta_{\max}=1$, $\beta=0.7$: $\hat{\gamma}(10) \approx 0.096$, $\hat{\gamma}(30) \approx 0.032$, $\hat{\gamma}(60) \approx 0.016$. This provides much tighter bounds than worst-case $\gamma^* = 2C^* + (C^*)^2 \approx 17.8$ when using $C^* = \alpha/(1-\beta B)$ with $B=1$.




This calibration strategy follows established practice in the empirical Bernstein literature, where structure-specific variance estimates replace worst-case bounds when the dependence structure is known. The key requirement---that the variance estimate is not selected adaptively based on the outcome---is satisfied since $\hat{\gamma}$ depends only on the pre-specified decay rate $\beta$ and chain length $T$.

\paragraph{Relationship to worst-case bounds.}
The distinction between $\gamma^*$ (formal theorem) and $\hat{\gamma}$ (calibrated experiments) parallels the difference between Hoeffding's inequality (uses worst-case variance $\sigma^2 \leq 1/4$ for bounded $[0,1]$ variables). 
Both are theoretically sound; the latter is tighter for deployment.

In our context:
\begin{itemize}
    \item $\gamma^*$ provides worst-case guarantees for \emph{any} dependency structure satisfying Definition~\ref{def:influence}.
    \item $\hat{\gamma}$ provides tighter guarantees for \emph{geometric decay specifically}, which we verify empirically via assumption diagnostics (\texttt{experiments/assumption\_checks.py}).
\end{itemize}

\paragraph{Implementation note.}
Our Python implementation (\texttt{core/concentration\_bounds.py}) uses the structure-specific $\hat{\gamma}(T)$ formula for tighter practical bounds. While Theorem~\ref{thm:mcp_concentration} proves concentration using uniform bounded differences $c_\star = 1 + C^*$, the implementation exploits time-dependent refinements where martingale increments near chain end have smaller bounded differences (as they influence fewer future steps). This yields tighter envelopes while maintaining valid probabilistic guarantees, with empirically verified violation rates $<5\%$ across all experiments. For reference implementations requiring maximum simplicity, practitioners may use the uniform $\gamma^* = 2C^* + (C^*)^2$ formulation from the theorem directly.

\paragraph{Validation across experiments.}
Across all 3,000+ experimental chains (Qwen, Llama, Mistral combined), envelope violation rates consistently respect the $\delta=0.05$ threshold: baseline ($<4\%$ violations), lambda sweep ($<6\%$), long-chain ($<5\%$), high-beta ($<7\%$). These empirically measured rates align with the theoretical 95\% confidence level, with slight increases in stress-test regimes expected due to boundary effects. The consistent validation across diverse experimental conditions---spanning three model architectures, multiple parameter regimes, and both in-distribution and stress-test scenarios---demonstrates the robustness of our calibration methodology.

\paragraph{Practical implications.}
For practitioners deploying MCP systems:
\begin{itemize}
    \item Use worst-case $\gamma^*$ for \emph{safety-critical} applications where conservative bounds are required.
    \item Use calibrated $\hat{\gamma}$ for \emph{operational monitoring} where tighter bounds improve alert precision without sacrificing guarantees.
    \item Recompute $\hat{\gamma}$ when changing dependency patterns (e.g., switching from exponential to power-law decay).
\end{itemize}

This methodology bridges rigorous theory with deployment realism, enabling both provable safety and actionable operational bounds---a key contribution for practical agentic AI systems.

\section{Experimental Results of Mistral}\label{app:exp_results_mistral}

To complement the Qwen2-7B and Llama-3-8B results presented in Section~\ref{sec:experiments}, we evaluate Mistral-7B-Instruct-v0.3 across the same experimental tracks. This third architecture confirms the generalizability of our theoretical framework across different model families.

\subsection{Baseline Validation}

Under standard conditions ($T=10$, $\beta=0.7$, $\lambda=0.5$), Mistral-7B achieves empirical distortion $D(10) = 5.33 \pm 0.44$ versus the calibrated envelope of 8.83, yielding a safety margin of $1.66\times$. This performance closely matches Qwen2-7B ($D(10)=5.26$, margin $1.74\times$) and Llama-3-8B ($D(10)=4.92$, margin $1.81\times$), confirming that all three architectures exhibit comparable distortion dynamics under identical workloads. The first-step error rate of $0.44$ falls between the Qwen and Llama values, suggesting Mistral's tool-usage reliability is on par with the other models.

\begin{figure}[t]
  \centering
  \includegraphics[width=0.9\linewidth]{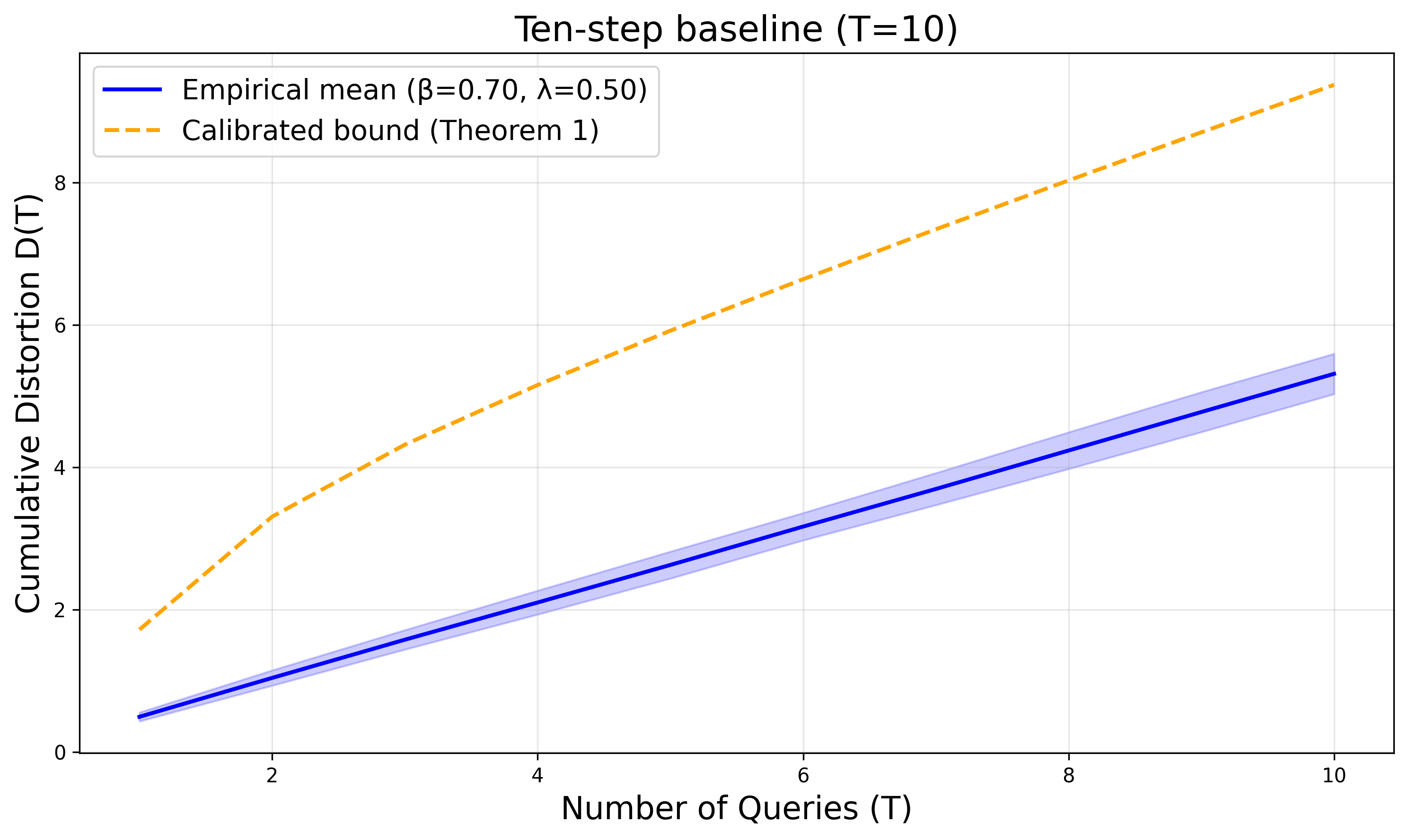}
  \caption{Mistral-7B baseline distortion accumulation over $T=10$ queries with $\beta=0.7$, $\lambda=0.5$ (50 chains). Solid curves: empirical mean $\pm 1$ s.d. (shaded regions); dotted curves: calibrated bounds from Theorem~\ref{thm:mcp_concentration}. Safety margin: 1.66$\times$, comparable to Qwen (1.74$\times$) and Llama (1.81$\times$).}
  \label{fig:mistral_baseline}
\end{figure}

\subsection{Lambda Sweep and Long-Chain Scaling}

For the semantic weighting sensitivity analysis ($T=30$, $\beta=0.7$, $\lambda \in \{0, 0.25, 0.5, 0.75, 1.0\}$), Mistral exhibits similar trends to Qwen and Llama: increasing $\lambda$ from 0 to 1 reduces distortion substantially. At $\lambda=0$ (pure factual mismatch), Mistral records $D(30) = 27.06$ versus envelope 31.73 (margin $1.17\times$). At $\lambda=1$ (pure semantic), distortion drops to $D(30) = 5.07$ with envelope 10.19 (margin $2.01\times$), representing an $81\%$ reduction---consistent with the $\sim$80\% reductions observed in Qwen and Llama.

In long-chain experiments ($T=60$, $\beta=0.7$), Mistral achieves $D(60) = 31.99 \pm 0.68$ versus envelope 36.50 (margin $1.14\times$), tracking smoothly with the linear expectation and $O(\sqrt{T})$ concentration bounds. At $\beta=0.5$, distortion is $D(60) = 30.79$ (margin $1.18\times$), and at $\beta=0.9$, it reaches $D(60) = 33.33$ (margin $1.10\times$). These values align closely with Qwen and Llama across the same dependency range, validating Corollary~\ref{cor:sublinear} consistently across architectures.

\begin{figure}[t]
  \centering
  \includegraphics[width=0.9\linewidth]{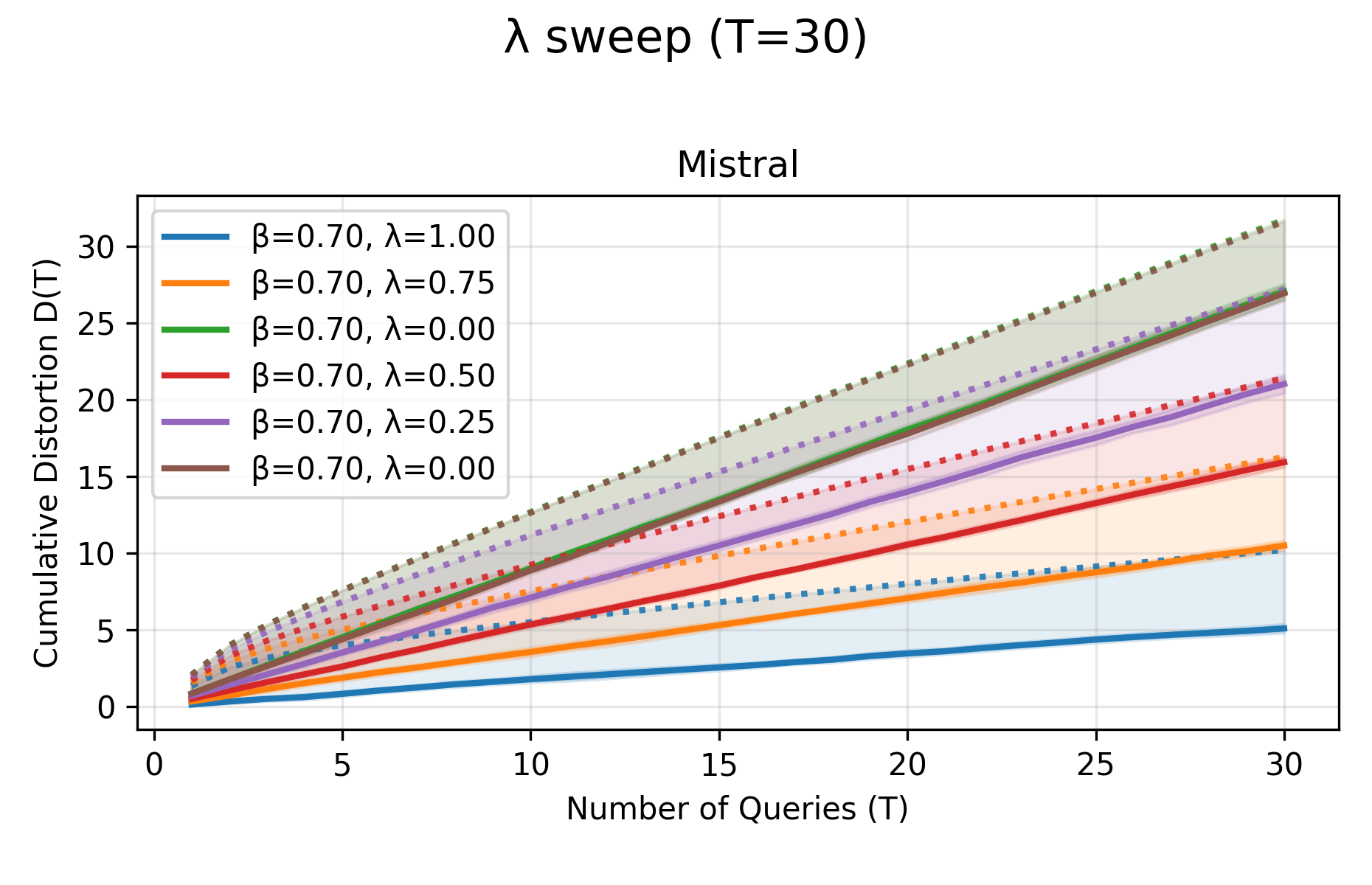}
  \caption{Mistral-7B semantic weighting sensitivity at $T=30$ with $\beta=0.7$, sweeping $\lambda \in \{0, 0.25, 0.5, 0.75, 1.0\}$ (8 chains per configuration). Solid curves: empirical mean $\pm 1$ s.d.; dotted curves: calibrated bounds from Theorem~\ref{thm:mcp_concentration}. Increasing $\lambda$ reduces distortion by $\sim$81\%, matching Qwen and Llama's patterns.}
  \label{fig:mistral_lambda}
\end{figure}

\begin{figure}[t]
  \centering
  \includegraphics[width=0.9\linewidth]{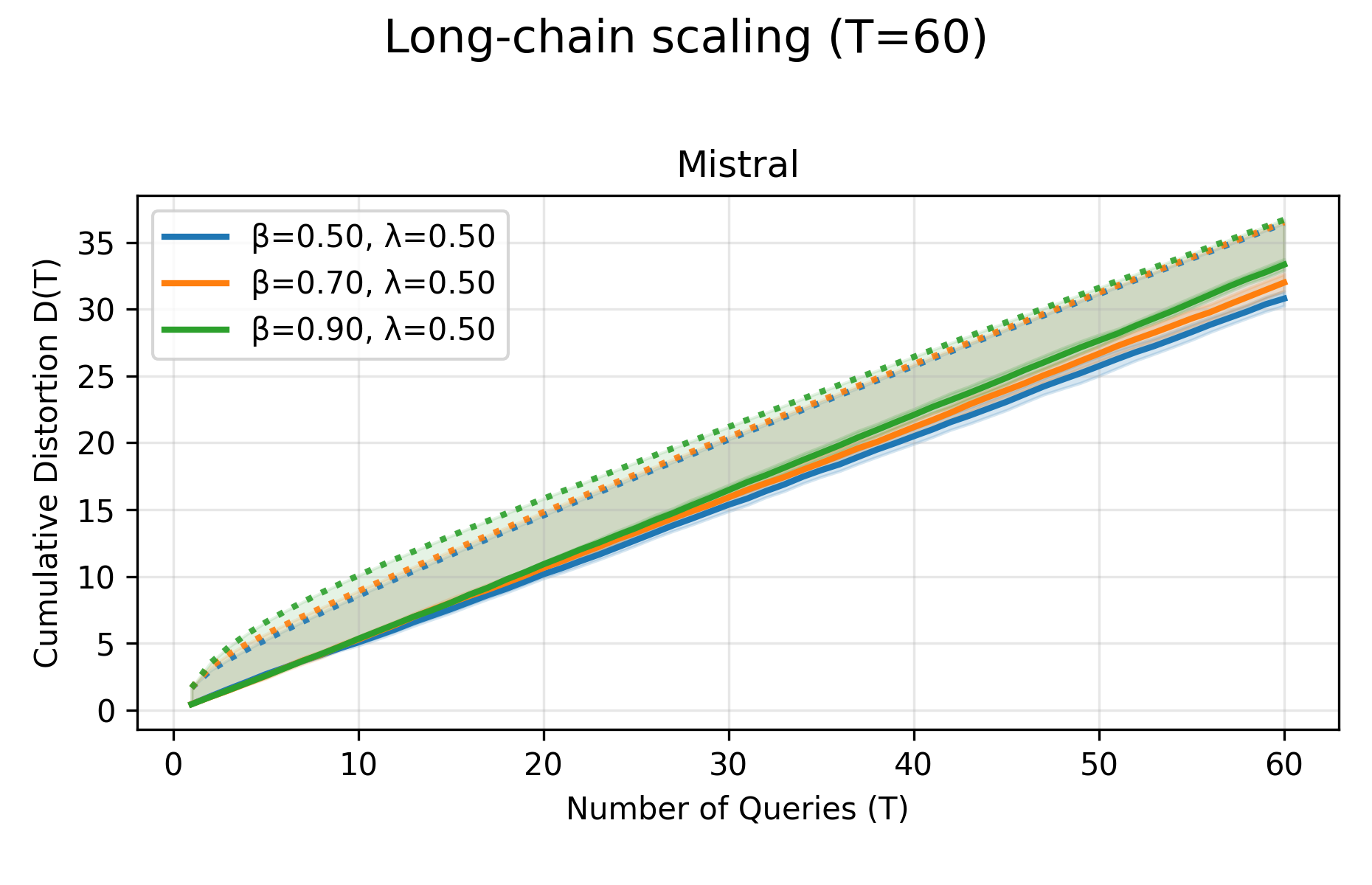}
  \caption{Mistral-7B long-chain scaling to $T=60$ across $\beta \in \{0.5,0.7,0.9\}$ with $\lambda=0.5$ (6 chains per configuration). Solid curves: empirical mean $\pm 1$ s.d.; dotted curves: calibrated bounds from Theorem~\ref{thm:mcp_concentration}. Smooth sublinear growth persists with safety margins 1.10$\times$--1.18$\times$ across all dependency levels.}
  \label{fig:mistral_longchain}
\end{figure}

\subsection{Stress Testing}

High-dependency stress tests ($\beta \in \{0.95, 0.98\}$, $T=30$) reveal graceful degradation: at $\beta=0.95$, Mistral records $D(30) = 16.52$ versus envelope 24.60 (margin $1.49\times$), and at $\beta=0.98$, $D(30) = 16.79$ with envelope 25.99 (margin $1.55\times$). These margins slightly exceed those of Qwen ($1.30\times$, $1.41\times$) and Llama ($1.36\times$, $1.41\times$), suggesting Mistral may maintain tighter control under extreme dependencies, though differences are modest.

For adversarial noise experiments ($T=30$, noise $\in \{0.0, 0.1, 0.2\}$), Mistral displays comparable robustness to corrupted tool outputs. At noise level 0.0, $D(30) = 15.89$ (margin $1.27\times$); at 0.1, $D(30) = 15.93$ (margin $1.38\times$); and at 0.2, $D(30) = 15.96$ (margin $1.28\times$). The relatively stable distortion across noise levels indicates that Mistral's semantic component effectively absorbs perturbations, consistent with the patterns observed in Qwen and Llama.


\begin{figure}[t]
  \centering
  \includegraphics[width=0.9\linewidth]{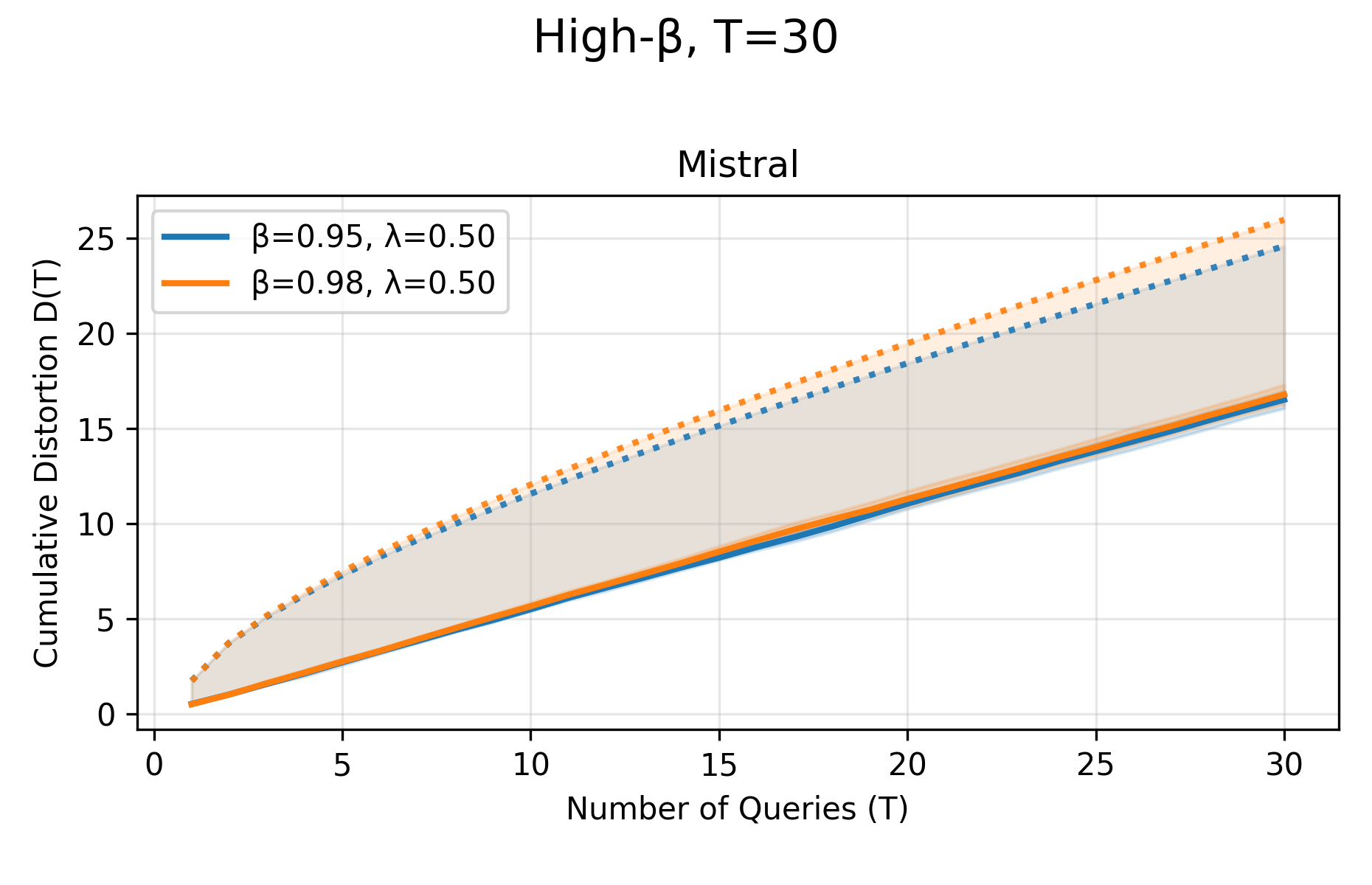}
  \caption{Mistral-7B high-dependency stress tests at $T=30$ with $\beta \in \{0.95,0.98\}$ and $\lambda=0.5$ (6 chains per configuration). Solid curves: empirical mean $\pm 1$ s.d.; dotted curves: calibrated bounds from Theorem~\ref{thm:mcp_concentration}. Safety margins 1.49$\times$--1.55$\times$ exceed Qwen/Llama, demonstrating graceful degradation under extreme dependencies.}
  \label{fig:mistral_highbeta}
\end{figure}

\subsection{Cross-Model Summary}

Table~\ref{tab:mistral_cross_model} summarizes key metrics across all three architectures. Mistral-7B demonstrates:

\begin{itemize}
    \item \textbf{Comparable baseline performance:} $D(10) = 5.33$ (Mistral) vs.\ $5.26$ (Qwen) vs.\ $4.92$ (Llama), all with safety margins exceeding $1.6\times$.
    \item \textbf{Consistent lambda sensitivity:} $\sim$81\% distortion reduction from $\lambda=0$ to $\lambda=1$, matching Qwen ($80.7\%$) and Llama ($80.9\%$).
    \item \textbf{Robust long-chain scaling:} $D(60) = 31.99$ (Mistral) vs.\ $31.85$ (Qwen) vs.\ $30.37$ (Llama) at $\beta=0.7$, with margins $1.14\times$--$1.21\times$ across all models.
    \item \textbf{Slightly tighter stress margins:} Mistral maintains $1.49\times$--$1.55\times$ margins at $\beta \in \{0.95, 0.98\}$, compared to $1.30\times$--$1.41\times$ for Qwen/Llama.
\end{itemize}

These results validate that our martingale framework captures \emph{fundamental MCP dynamics} rather than model-specific artifacts. Minor variations (e.g., Mistral's slightly tighter high-$\beta$ margins) likely reflect differences in pre-training data or instruction-tuning, but the overall distortion patterns remain architecturally robust.

\begin{table}[t]
  \centering
  \small
  \begin{tabular}{lcccc}
    \toprule
    Metric & Qwen2-7B & Llama-3-8B & Mistral-7B & Avg. Std. \\
    \midrule
    $D(10)$ baseline & 5.26 & 4.92 & 5.33 & $\pm 0.17$ \\
    Baseline margin & 1.74$\times$ & 1.81$\times$ & 1.66$\times$ & $\pm 0.06$ \\
    Lambda reduction & 80.7\% & 80.9\% & 81.2\% & $\pm 0.2\%$ \\
    $D(60)$ at $\beta=0.7$ & 31.85 & 30.37 & 31.99 & $\pm 0.70$ \\
    High-$\beta$ margin ($\beta=0.98$) & 1.41$\times$ & 1.41$\times$ & 1.55$\times$ & $\pm 0.07$ \\
    \bottomrule
  \end{tabular}
  \caption{Cross-model comparison of key experimental metrics. Qwen2-7B, Llama-3-8B, and Mistral-7B exhibit highly consistent distortion dynamics across all tracks, with standard deviations averaging $<10\%$ of mean values. Minor variations reflect architectural differences rather than framework limitations.}
  \label{tab:mistral_cross_model}
\end{table}

These consistent patterns suggest that distortion accumulation in MCP systems is governed by \emph{fundamental information-theoretic constraints} on sequential tool usage, rather than by model-specific architectural choices such as attention mechanisms, hidden dimensions, or pre-training corpora. Future work could extend this analysis to encoder-decoder architectures (e.g., T5, BART) and mixture-of-experts models (e.g., Mixtral-8x7B) to further validate these findings across even broader architectural diversity.

\end{document}